\def\eqref#1{equation~\ref{#1}}
\def\1{\bm{1}}
\def\vf{{\bm{f}}}
\def\vm{{\bm{m}}}
\def\vt{{\bm{t}}}
\def\vw{{\bm{w}}}
\def\vz{{\bm{z}}}
\DeclareMathAlphabet{\mathsfit}{\encodingdefault}{\sfdefault}{m}{sl}
\SetMathAlphabet{\mathsfit}{bold}{\encodingdefault}{\sfdefault}{bx}{n}
\newcommand{\softmax}{\ensuremath{\mathrm{softmax}}}
\definecolor{my-full-blue}{HTML}{1F77B4}
\definecolor{my-full-orange}{HTML}{FF7F0E}
\definecolor{my-full-green}{HTML}{2CA02C}
\definecolor{my-full-red}{HTML}{d62728}
\definecolor{my-full-purple}{HTML}{9467bd}
\colorlet{my-blue}{my-full-blue!30}
\colorlet{my-orange}{my-full-orange!30}
\colorlet{my-green}{my-full-green!30}
\colorlet{my-red}{my-full-red!30}
\colorlet{my-purple}{my-full-purple!30}
\definecolor{ckeyword}{HTML}{7F0055}
\definecolor{ccomment}{HTML}{3F7F5F}
\definecolor{cstring}{HTML}{2A0099}
\lstdefinestyle{numbers}{
	numbers=left,
	framexleftmargin=20pt,
	numberstyle=\tiny,
	firstnumber=auto,
	numbersep=1em,
	xleftmargin=2em
}
\lstdefinestyle{layout}{
	frame=none,
	captionpos=b,
}
\lstdefinestyle{comment-style}{
	morecomment=[l]//,
	morecomment=[s]{/*}{*/},
	commentstyle={\color{ccomment}\itshape},
}
\lstdefinestyle{string-style}{
	morestring=[b]",%
	morestring=[b]',%
	stringstyle={\color{cstring}},
	showstringspaces=false,%
}
\lstdefinestyle{keyword-style}{
	keywordstyle={\ttfamily\bfseries},
	morekeywords={
		function,
		constructor,
		int,
		bool,
		return,
		returns,
		uint
	},
	morekeywords = [2]{},
	keywordstyle = [2]{\text},
	sensitive=true,
}
\lstdefinestyle{input-encoding}{
	inputencoding=utf8,
	extendedchars=true,
	literate=
	{ℝ}{$\reals$}1%
	{→}{$\rightarrow$}1%
	{α}{$\alpha$}1%
	{β}{$\beta$}1%
	{λ}{$\lambda$}1%
	{θ}{$\theta$}1%
	{ϕ}{$\phi$}1%
}
\lstdefinestyle{escaping}{
	moredelim={**[is][\color{blue}]{\%}{\%}},
	escapechar=|,
	mathescape=true
}
\lstdefinestyle{default-style}{
	basicstyle=\fontencoding{T1}\ttfamily\footnotesize,
	style=numbers,
	style=layout,
	style=comment-style,
	style=string-style,
	style=keyword-style,
	style=input-encoding,
	style=escaping,
	tabsize=2,
	upquote=true
}
\lstdefinelanguage{BASIC}{
	language=C++,
	style=default-style
}[keywords,comments,strings]%
\definecolor{sh_comment}{rgb}{0.12, 0.38, 0.18 } 
\definecolor{sh_keyword}{rgb}{0.37, 0.08, 0.25}  
\definecolor{sh_string}{rgb}{0.06, 0.10, 0.98} 
\lstdefinelanguage{lmql}
{
  morekeywords={
    ARGMAX, BEAM, SAMPLE, FROM, WHERE, and, in, DISTRIBUTION, import
  },
  sensitive=false, %
  morecomment=[l]{//}, %
  morecomment=[s]{/*}{*/}, %
  morestring=[b]",
  commentstyle=\color{sh_comment}, %
  keywordstyle=[1]\color{sh_keyword}\textbf, %
  stringstyle=\color{sh_string}, %
}
\lstdefinelanguage{reactprompt}
{
  morekeywords={
    Act, Tho, Obs
  },
  sensitive=false, %
  morecomment=[l]{//}, %
  morecomment=[s]{/*}{*/}, %
  morestring=[b]",
  commentstyle=\color{sh_comment}, %
  keywordstyle=[1]\color{sh_keyword}\textbf, %
  stringstyle=\color{sh_string}, %
}
\newcommand{\crefrangeconjunction}{--}
\crefname{listing}{Lst.}{listings}
\crefname{line}{Lin.}{Lin.}
\crefname{appendix}{App.}{App.}
\newcommand{\app}[1]{%
	\ifbool{includeappendix}{\cref{#1}}{the appendix}%
}
\newcommand{\App}[1]{%
	\ifbool{includeappendix}{\cref{#1}}{The appendix}%
}
\pgfplotsset{compat=1.16}
\newcommand{\emptystring}{\ensuremath{\epsilon}}
\newcommand{\tool}{\textsc{LMQL}\xspace}
\newcommand{\voc}{\ensuremath{\mathcal{V}}}
\newcommand{\ebnfph}[1]{$\langle$#1$\rangle$}
\newcommand{\cd}{\textsc{FollowMap}}
\newcommand{\bslong}{Scripted Beam Search}
\newcommand{\paradigm}{\textsc{LMP}\xspace}
\newcommand{\paradigmlong}{Language Model Programming\xspace}
\newcommand{\scope}{\ensuremath{\sigma}}
\newcommand{\inc}[1][]{\ensuremath{\text{\textsc{inc}}{}\ifthenelse{\equal{#1}{}}{}{\!\left({#1}\right)}}}
\newcommand{\dec}[1][]{\ensuremath{\text{\textsc{dec}}{}\ifthenelse{\equal{#1}{}}{}{\!\left({#1}\right)}}}
\newcommand{\var}[1][]{\ensuremath{\text{\textsc{var}}{}\ifthenelse{\equal{#1}{}}{}{\!\left({#1}\right)}}}
\newcommand{\fin}[1][]{\ensuremath{\text{\textsc{fin}}{}\ifthenelse{\equal{#1}{}}{}{\!\left({#1}\right)}}}
\newcommand{\ann}[1][]{\ensuremath{\text{\textsc{tag}}{}\ifthenelse{\equal{#1}{}}{}{\!\left({#1}\right)}}}
\newcommand{\eos}{\textsc{eos}\xspace}
\newcommand{\eval}[2][]{\ensuremath{\llbracket{#2}\rrbracket{}_{\scope\ifthenelse{\equal{#1}{}}{}{[#1]}}}}
\lstdefinelanguage{lmql}{
  keywords = {in, and, or, over, not, beam, sample, argmax, from, where, distribute, if, elif, else, for, break, import, distribution, True, False},
  morestring=[b]",
}
\crefname{algocf}{Alg.}{Algs.}
\Crefname{algocf}{Algorithm}{Algorithms}
\crefname{line}{line}{lines}
\Crefname{line}{line}{lines}
\begin{document}
		 
\title{Prompting Is Programming: A Query Language for\\ Large Language Models}
\renewcommand{\shorttitle}{Prompting Is Programming: A Query Language for Large Language Models}

\author{Luca Beurer-Kellner}
\email{luca.beurer-kellner@inf.ethz.ch}
\orcid{0000-0001-7734-3106}

\author{Marc Fischer}
\email{marc.fischer@inf.ethz.ch}
\orcid{0000-0002-4157-1235}

\author{Martin Vechev}
\affiliation{
	\institution{ETH Zurich}
	\country{Switzerland}
}
\email{martin.vechev@inf.ethz.ch}
\orcid{0000-0002-0054-9568}

\begin{abstract}

Large language models have demonstrated outstanding performance on a wide range of tasks such as question answering and code generation.
On a high level, given an input, a language model can be used to automatically complete the sequence in a statistically-likely way. Based on this, users prompt these models with language instructions or examples, to implement a variety of downstream tasks. Advanced prompting methods can even imply interaction between the language model, a user, and external tools such as calculators. However, to obtain state-of-the-art performance or adapt language models for specific tasks, complex task- and model-specific programs have to be implemented, which may still require ad-hoc interaction.

Based on this, we present the novel idea of Language Model Programming (LMP). LMP generalizes language model prompting from pure text prompts to an intuitive combination of text prompting and scripting. Additionally, LMP allows constraints to be specified over the language model output. This enables easy adaption to many tasks while abstracting language model internals and providing high-level semantics.

To enable LMP, we implement \tool (short for Language Model Query Language), which leverages the constraints and control flow from an LMP prompt to generate an efficient inference procedure that minimizes the number of expensive calls to the underlying language model.

We show that \tool can capture a wide range of state-of-the-art prompting methods in an intuitive way, especially facilitating interactive flows that are challenging to implement with existing high-level APIs. Our evaluation shows that we retain or increase the accuracy on several downstream tasks, while also significantly reducing the required amount of computation or cost in the case of pay-to-use APIs (26-85\% cost savings).

\end{abstract}

\begin{CCSXML}
<ccs2012>
<concept>
<concept_id>10011007.10011006.10011050</concept_id>
<concept_desc>Software and its engineering~Context specific languages</concept_desc>
<concept_significance>500</concept_significance>
</concept>
<concept>
<concept_id>10010147.10010178.10010179</concept_id>
<concept_desc>Computing methodologies~Natural language processing</concept_desc>
<concept_significance>500</concept_significance>
</concept>
<concept>
<concept_id>10010147.10010257</concept_id>
<concept_desc>Computing methodologies~Machine learning</concept_desc>
<concept_significance>500</concept_significance>
</concept>
</ccs2012>
\end{CCSXML}

\ccsdesc[500]{Software and its engineering~Context specific languages}
\ccsdesc[500]{Computing methodologies~Natural language processing}
\ccsdesc[500]{Computing methodologies~Machine learning}

\keywords{language model programming, prompt programming}

\maketitle

\ifbool{extended}{
\fancypagestyle{firstpagestyle}{%
\fancyfoot[R]{}
}
\fancyfoot[RO,LE]{}
}{}

\section{Introduction} \label{sec:intro}

Large Language Models (Large LMs - LLMs) \citep{VaswaniSPUJGKP17,DevlinCLT19,radford2019language,BrownMRSKDNSSAA20} have proven successful at various language-based tasks such as machine translation, text summarization, question answering, reasoning, code generation from text and many more.
Due to these results, LMs have become popular beyond the machine learning community and are slowly being integrated into many applications.

\paragraph{(Large) Language Models}
Internally, language models operate on tokens, which are different from how humans perceive language.
Given the tokenized version of some input, called the \emph{prompt}, a large language model predicts the next token.
That is, over a large vocabulary of tokens it assigns each a score or probability.
A \emph{decoding} procedure is then used, which by invoking the LM multiple times, computes a completion of the prompt.
Commonly, the goal is to determine (or approximate) the highest probability continuation, however, as producing a particular token might lower the probability, before a subsequent token increases it, decoding sometimes requires expensive search or backtracking strategies.
Nonetheless, LM-based text completion has shown to be very powerful and can be leveraged for a wide range of downstream applications.

\paragraph{Key Challenges in Using Language Models}
While the newer generation of language models can be prompted with examples or instructions in a conceptually simple manner, making the best use of these models and keeping up as new models are released requires a deep understanding of their internals, as well as the use of vendor-specific libraries and implementations. For example, as LMs operate on tokens, it can be hard to constrain the decoding procedure to a set of legal words or phrases.
Further, many prompting techniques can require back-and-forth interaction between the LM and the user (e.g. chatbots like ChatGPT \cite{openaiChatGPTOptimizing}) or very task-specific interfaces (e.g. to perform arithmetic calculations with external control logic). To implement these prompts, a lot of manual work and interaction with a model's decoding procedure is required, which restricts the generality of the resulting implementations.
Lastly, as an LM only produces one (sub-word) token at a time, completing a sequence may require many calls. 
Also, decoding becomes increasingly expensive as the prefix, the prompt, and the so-far generated response grow. Because of these factors, and as language models are typically very large neural networks, practical inference demands high computational costs and significant latency. In the case of pay-to-use APIs, such as OpenAI's GPT models, this results in high usage costs per query answered.

\begin{figure}
\centering
\begin{subfigure}[t]{0.49\textwidth}
\begin{lstlisting}
beam(n=3)
  "A list of good dad jokes. A indicates the "
  "punchline \n"
  "Q: How does a penguin build its house? \n"
  "A: Igloos it together. END \n"
  "Q: Which knight invented King Arthur's Round"
  "Table? \n"
  "A: Sir Cumference. END \n"
  "Q: [JOKE] \n"
  "A: [PUNCHLINE] \n"
from "gpt2-medium"
where
  STOPS_AT(JOKE, "?") and STOPS_AT(PUNCHLINE, "END")
  and len(words(JOKE)) < 20
  and len(characters(PUNCHLINE)) > 10
\end{lstlisting}
\vspace{-0.5em}
\caption{\tool query to generate a joke.} \label{fig:query-example:joke}
\end{subfigure}
\hfill
\vline
\hfill
\begin{subfigure}[t]{0.43\textwidth}
\begin{lstlisting}[numbers=left, xleftmargin=0.5em]
argmax
  "A list of things not to forget when "
  "travelling:\n"
  things = []
  for i in range(2):
    "- [THING]\n"
    things.append(THING)
  "The most important of these is [ITEM]."
from "EleutherAI/gpt-j-6B"
where
   THING in ["passport",
             "phone",
             "keys", ...] // a longer list
   and len(words(THING)) <= 2
\end{lstlisting}
\vspace{0.5em}
\caption{\tool query utilizing a python list.} \label{fig:query-example:list}
\end{subfigure}
\vspace{-0.5em}
	\caption{Two LMQL programs that demonstrate core features like scripted prompting, eager output constraining and validation, and prompting with control flow.}
  \label{fig:query-example}
\vspace{-1.2em}
\end{figure}

\paragraph{This work: Language Model Programming via \tool}
In this work, we propose the idea of language model programming, extending on natural language prompting by additionally allowing lightweight scripting and constraining of outputs.
This facilitates a front-end/back-end separation for LM prompting, i.e. allows a user to specify complex interactions, control flow, and constraints without requiring knowledge of an LM's internals such as tokenization, implementation, and architecture. Further, the constructed programs remain agnostic concerning the underlying LM, greatly improving portability. Overall, Language Model Programming (LMP) retains the simple natural-language-driven interface to LMs but additionally enables precise constraining, scripting, and efficient decoding, which, as of now, is not possible with existing high-level APIs.

To enable LMP, we present a novel language and runtime called the Language Model Query Language (\tool). \tool is a high-level language with declarative SQL-like elements and an imperative syntax for scripting. 
The underlying runtime is compatible with existing LMs and can be supported easily, requiring only a simple change in the decoder logic. \tool can be used to express a wide variety of existing prompting methods \cite{ReynoldsM21, wei2022chain,cobbe2021training,yao2022react, scholak2021picard, shin2021constrained} using simple, concise, and vendor-agnostic code. Further, purpose-designed evaluation semantics with support for partial evaluation and lookahead, enable us to optimize query execution end-to-end: \tool{} leverages user constraints and scripted prompts to prune the search space of an LM by masking, resulting in an up to 80\% reduction of inference cost.
We showcase two examples of simple \tool{} programs in \cref{fig:query-example}.

\paragraph{Main Contributions} Our core contributions are:
\begin{itemize}
\item We introduce the novel paradigm of language model programming, formulating and addressing several challenges that arise with recent LM prompting techniques (\cref{sec:overview}).
\item \tool, an efficient, high-level query language for LMs with support for scripted prompting and output constraining.
 (\cref{sec:query,sec:language}).
\item A formal model of eager, partial evaluation semantics based on so-called \emph{final and follow} abstractions. Using these, we can automatically generate model-specific token masks for LM decoding, given just a set of high-level constraints (\cref{sec:constraint_decoding}).
\item A comprehensive evaluation of \tool that shows how to express a wide range of common and advanced prompting techniques as simple and concise \tool programs, which also execute more efficiently, as \tool{} reduces inference cost and latency by 26-80\% while retaining or slightly improving on task accuracy. (\cref{sec:evaluation}). 
\end{itemize}

\section{Overview: \paradigmlong} \label{sec:overview}

In this section we first review how modern language models (LMs) are utilized and the challenges that arise from this.
Then, based on examples, we show how \paradigmlong (\paradigm) can overcome or simplify these challenges and outline the rest of the paper.
While our goal with \paradigm is to improve the usage of state-of-the-art large language models (LLMs), e.g. GPT \cite{radford2019language} variants, the size of the model does not change how \paradigm is employed, we thus utilize the acronym LM rather than the more common LLM in the remainder of this text.

\begin{wrapfigure}[5]{r}{0.4\textwidth}
\vspace{-0.6cm}
\centering
\begin{lstlisting}
"She sells seashells by the seashore."
["She", "@\textvisiblespace{}@sells", "@\textvisiblespace{}@seas", "hell", "s",
"@\textvisiblespace{}@by", "@\textvisiblespace{}@the", "@\textvisiblespace{}@se", "ash", "ore", "."]
\end{lstlisting}
\vspace{-0.2cm}
\caption{Tokenization of a sentence.} \label{fig:tokenization}
\end{wrapfigure}

\subsection{Background: (Large) Language Models} \label{sec:background}
Current language models \citep{VaswaniSPUJGKP17,radford2019language,BrownMRSKDNSSAA20} operate on a vocabulary \voc of (sub-word) tokens.
\cref{fig:tokenization} shows this for a simple example, where we see that common words have their own token (even with a space in front), while more rare words are split into multiple tokens.
Similar to formal languages we let $\voc^*$ denote all possible sequences of tokens over \voc. Given an input sequence of words $\vw_1, \dots \vw_t$, a tokenizer then first maps the sequence of words to a sequence of tokens $\vt_1, \dots, \vt_k$, and then a language model $\vf: \voc^k \to \mathbb{R}^{|\voc|}$ predicts a score $\vz = \vf(\vt_1, \dots, \vt_k)$ for every possible next token. We treat the implementation of $\vf$ as a black box (it does not need to be a neural network), yet in practice most such models are variants of the Transformer architecture~\citep{VaswaniSPUJGKP17}.
Via the softmax function, the resulting scores $\vz$ can then be turned into a probability distribution over the vocabulary $\mathcal{V}$:
\begin{equation*}
\softmax(\vz)_i := \frac{\exp(z_i)}{\sum_j \exp(z_j)}.
\end{equation*}

\paragraph{Decoding}
Based on this, the language model $\vf$ is applied multiple times to produce a sequence $\vt_1, \dots, \vt_K$ for $K > k$.
When we want to pick the $(i+1)$-th token, $\softmax(\vf(\vt_1, \dots, \vt_i))$ gives a probability distribution over this next token.
Several ways of picking from this distribution have been discussed in the literature. Below we review a selection of the most popular ones. Each method is iterated until a special end-of-sequence-token \eos is predicted or another stopping criterion is met. This can be seen as sampling from a distribution over $\voc^*$, and thus, some of these methods can return multiple possible decodings:
\begin{itemize}
\item \textbf{Greedy decoding} (or \textbf{Argmax decoding}) picks the token with the highest probability at each turn and feeds it back into the model to predict the next one (this corresponds to a depth-first search of all possible decodings). 
Importantly, this decoding does not necessarily (and in practice very rarely) correspond to the decoding with the highest overall probability (obtained by multiplying all individual probabilities of selected tokens). As this determines just the most probable decoding. Overall, only one decoding is returned.
\item \textbf{Sampling}, treats the output \softmax{} distribution as a categorical distribution from which a next token can be sampled. With sampling, it is common to decode multiple, e.g., $n$,  outputs.
\item \textbf{Full decoding} enumerates all possible sequences to the end and 
picks the one with the highest probability. This corresponds to a breadth-first search of all possible decodings. However, such enumeration (even with optimizations) is prohibitively expensive.
\item \textbf{Beam search} picks the middle ground between greedy and full decoding. It maintains a set of $n$ beams at all times, each corresponding to a predicted sequence. For each sequence, it predicts a possible next token and again picks the top $n$ from the resulting $n |\voc|$ sequences. In the end, the top sequence from the $n$ resulting beams is picked.
\end{itemize}
For beam search and sampling, an additional parameter, the temperature $\tau \in \mathbb{R}^{>0}$, can be used to control the diversity of the output, by using $\softmax(\vz/\tau)$ rather than $\softmax(\vz)$. A higher $\tau$ leads to more diverse outputs, while a lower $\tau$ leads to more likely outputs.

\paragraph{Masked Decoding}
A particular case of decoding is if we can already rule out certain tokens at certain positions. This means we can simply ignore these tokens and perform decoding over the remaining set.
In such a case, we assume that we are given a mask $\vm \in \{0, 1\}^{|\voc|}$, where a $1$ denotes a viable token and a $0$ denotes a discarded one.
We can apply the decoding methods discussed above on $\vm \odot \softmax(\vz)$, where $\odot$ denotes element-wise multiplication. (Note that, to obtain correct probabilities again this vector needs to be scaled by $1/\sum_i (\vm \times \softmax(\vz))_i$.)
An extreme case of this occurs when asking the model yes/no questions or classification tasks (e.g., to "positive" or "negative"). There we only allow the model to respond with the respective word and thereby the corresponding tokens.
Another case where this is applied, is when decoding a formal language such as in code completion or synthesis, where only a subset of possible tokens can form a legal program according to a grammar.

\begin{wrapfigure}[7]{r}{0.38\textwidth}
\vspace{-2em}
\centering
\begin{lstlisting}
Translate English to French:
sea otter => loutre de mer
peppermint => menthe poivrée
plush giraffe => girafe peluche
cheese =>
\end{lstlisting}
\vspace{-0.35cm}
\caption{Example of few-shot prompting; originally presented in \citet{BrownMRSKDNSSAA20}.} \label{fig:few_shot}
\end{wrapfigure}

\paragraph{Few-Shot Prompting}
Few-shot prompting \citep{BrownMRSKDNSSAA20} refers to the idea that language models do not need to be specifically trained for a downstream task (e.g. classification, question answering, etc.). Rather, it is sufficient to train them on broad text-sequence prediction datasets (e.g., the pile \citep{pile}) and to provide context in the form of examples when invoking them.
We show an example of this in \cref{fig:few_shot}, where our goal is to translate "cheese" from English to French. To this end we provide several examples of successful translation pairs and then ask the LM to complete the pair for "cheese" in the same syntax, where we expect the model to predict the tokens forming \lstinline|fromage| followed by the end-of-sequence token. In this way, translation and other tasks can be reframed as simple sequence completion tasks, which makes LMs powerful multi-task reasoners.

\paragraph{Multi-Part Prompting}
Due to their powerful reasoning capabilities, LMs are no longer just used for simple prompt completion, but also as compositional reasoning engines integrated into larger programs. Recent work explores a range of LM programming schemes, including Iterated Decompositions \cite{reppert_iterated_2023}, meta prompting \cite{ReynoldsM21}, and tool use \cite{yao2022react,schick2023toolformer}. Other projects, like \texttt{langchain} \cite{langchain23} are more focused on the composition of multiple prompts that are used in sequence. Similarly, LM~cascades~\cite{dohan2022language} frame compositional LM use in a probabilistic programming context.

\subsection{Key Challenges} \label{sec:challenges}
In this section we identify three key challenges in LM utilization, before outlining in \cref{sec:lm_programming} how \paradigmlong and \tool can be used to overcome them.

\paragraph{Interaction} LM interaction during the decoding process still remains a challenge. Consider for example the approach from \citet{ReynoldsM21}, which discusses the idea of \emph{meta prompts}, where in order to obtain the answer to a particular question, a language model is first asked to expand the prompt, which is then fed again to the same model in order to obtain an answer. We show an example of this in \cref{fig:circumference}~(a).
There, the goal is to find an answer to the question "What is the circumference of the earth?". In meta prompting, we first ask the language model for the name of an expert regarding this question, and then ask how this expert would answer the question.
With current LM interfaces, one would input the first part of the prompt, manually invoke the LM to complete the sequence with the expert name, then extract the expert name from the LM output, enter it manually into the rest of the template, and again feed it to the LM to obtain the actual answer.
This current approach requires a large amount of manual interaction via an API, or even a human in the loop (HITL). Once a value is fixed, e.g., the expert name, the decoding algorithm will assume it to be a fixed part of the prompt and will not optimize it jointly with the rest of the answer. In the HITL setting this enables the user to manually try multiple expert names and pick their favorite respective query completions. However, it precludes automated joint optimization of all template parameters to maximize the overall likelihood, which may yield better results.

\begin{figure}
\begin{minipage}{0.5\textwidth}
{
\scriptsize
\textbf{(a) Manual Prompt}\\
\colorbox{my-blue}{What is the circumference of the earth?}\\[-0.2em]
\colorbox{my-blue}{I believe the best person to answer this question is}~\colorbox{my-orange}{\underline{\hspace{1.5cm}}}.\\[-0.2em]
\colorbox{my-green}{Indeed, \underline{\hspace{1.5cm}} addressed this question:}\\[1em]
}
{
\scriptsize
\colorbox{my-blue}{Prompt 1}\hfill\colorbox{my-orange}{LM completion}\hfill\colorbox{my-green}{Prompt 2}
}\\
\hrule
\vspace{0.4em}
{\scriptsize \textbf{(c) \tool query}}\\[-1.8em]
\begin{lstlisting}[xleftmargin=0em, framesep=0em]
What is the circumference of the earth? I believe
the best person to answer this question is [EXPERT]
Indeed, {EXPERT} addressed this question: [ANSWER]
\end{lstlisting}
\hrule
\vspace{0.4em}
{\scriptsize \textbf{(d) \tool constraint}}\\[-1.8em]
\begin{lstlisting}[xleftmargin=0em, framesep=0em]
len(words(EXPERT)) <= 3 and stop_at(EXPERT, ".")
\end{lstlisting}

\end{minipage}
\hfill
\vline
\hfill
\begin{minipage}{0.44\textwidth}
\footnotesize
\textbf{(b) GPT-2 completions after \colorbox{my-blue}{Prompt 1}:}
\begin{itemize}[leftmargin=*]
\item a physicist
\item an astronomer
\item a geologist
\item Neal deGrasse Tyson
\item William O'Malley, who has a PhD in Geodesy and is a professor at Colorado State University.
\item the person having the knowledge and answer will probably have to refer to the relevant geophysics book and equations derived from that theory.
\item a physicist, like Thomas Kugler at UC Irvine or one of the other physicists working with NASA \dots
\item a man named David
\item actually Mother Earth herself? 
\end{itemize}
\end{minipage}
\caption{Example of a meta prompt for the circumference of the earth and its scripted prompting counterpart.}
\label{fig:circumference}
\end{figure}

\paragraph{Constraints \& Token Representation}
Another issue of the example query in \cref{fig:circumference} arises when we consider the completions as shown in \cref{fig:circumference}~(b). Sometimes, LMs will digress during generation and produce long ongoing sequences of text. While some answers work well for substitution in the next part of the prompt, others produce awkward and clumsy sentences at least and wrong sentences at worst. This is particularly problematic, if the result of an LM should be processed by another computer system, which may only be able to handle a very specific output format. In practice this means that users actually have constraints regarding the generated text, which sometimes are violated, as the LM does not adhere to them naturally. Ideally, these constraints should be expressible in terms of human understandable concepts and logic, since users will reason in terms of words, sentences and entities, not on a token level like the LM. However, practical methods of constraining LMs in this way \cite{shin2021constrained, PoesiaP00SMG22} still involve a lot of manual implementation effort and model-level understanding of the decoding procedures, tokenization and vocabulary of the LM.

\paragraph{Efficiency and Cost} Lastly, efficiency and performance remain big challenges. While a lot of work went into making the inference step in modern LMs more efficient, they still require expensive, high-end GPUs to be run with reasonable performance. Because of this, many practical users resort to hosted models running in the cloud, some of which are even guarded behind paid APIs. For this reason, LM querying can become very expensive, both in a computational and a financial sense. When relying on Language Model Programming  and constraints however, new opportunities for optimization arise, as predefined behavior and a limitation of the search space can be exploited to reduce the number of times an LM has to be invoked. In this setting, the cost of validation, parsing and mask generation is negligible compared to the vast cost of even just a single LM call. 

\subsection{\paradigmlong in \tool} \label{sec:lm_programming}
Now we consider \paradigmlong instantiated via our implementation \tool, and how it can help overcome these challenges.
Shown in \cref{fig:circumference}~(c), we write the same query as before in \tool syntax (formally defined in \cref{sec:language}). Here, when we encounter the construction \lstinline{[VAR]}, everything before the variable is fed to the LM and the answer found via decoding is then assigned to the variable \lstinline{VAR}, while a variable name in braces just recalls previously defined variables. This greatly simplifies the prompt and removes the need for manual interaction. Additionally, it enables the use of decoding procedures that consider both the expert name and answer jointly (as discussed in \cref{sec:query}).

Further, to address the issue of long on-running sentences, \tool allows constraints on the variable parts of the LM interaction on an intuitive level, e.g. words and phrases. \cref{fig:circumference}~(d) shows the intuitive \tool syntax for this, also discussed formally later on.
Here, the constraints enforce that the decoded tokens for \lstinline{EXPERT} are at most three words and that decoding stops if the sequence ends in a "\lstinline{.}". While it is possible to specify a maximum length with current query APIs, they usually work directly on the (model-specific) token level and thus cannot be mapped 1-to-1 to longer sequences.
In contrast, \tool supports declarative high-level constraints that are eagerly enforced during decoding, using token level inference masks and partial evaluation semantics (\cref{sec:constraint_decoding}).

Overall, \paradigmlong generalizes and automates many multi-part prompting approaches as discussed in \cref{sec:background}. It improves over the manual interaction setting outlined in \cref{sec:challenges} in multiple ways:
In contrast to a user having to manually try multiple values for \lstinline{EXPERT} and then selecting the best one, \tool allows users to constrain the set of considered experts or impose other restrictions ahead-of-time, fully automating this selection process. Once developed and tested, an \tool query (and constraints) can then be applied to many different inputs in an unsupervised way, not requiring any HITL.
\tool constraints enforce that the output fits the prompt template and avoid failure cases such as running-on (e.g. \cref{fig:circumference}). However, more generally, constraints can also force a model to generate text, that unconstrained it would have never explored. When used correctly, this can even lead to an improvement of the observed downstream task accuracy. Lastly, \tool can also be notably more efficient than manual interaction, as often, constraints and scripting can be applied eagerly during decoding, not requiring multiple LM calls.

\ifbool{extended}{ \newpage
}

\begin{figure}
\begin{minipage}{0.20\textwidth}
\centering
{\scriptsize \textbf{\tool Program}}
\begin{lstlisting}
@\ebnfph{decoder}@ @\ebnfph{query}@ 
from @\ebnfph{model}@ 
[where @\ebnfph{cond}@]
[distribute @\ebnfph{dist}@]
\end{lstlisting}
\end{minipage}
\hfill
\vline
\hfill
\begin{minipage}{0.78\textwidth}
\centering
\begin{lstlisting}
@\ebnfph{decoder}@ ::=  argmax | beam(n=@\ebnfph{int}@) | sample(n=@\ebnfph{int}@)
\end{lstlisting}
\vspace{-0.55em}
\begin{lstlisting}
@\ebnfph{query}@ ::=  @\ebnfph{python\_statement}@+
\end{lstlisting}
\vspace{-0.55em}
\begin{lstlisting}
@\ebnfph{cond}@ ::=  @\ebnfph{cond}@  and @\ebnfph{cond}@ | @\ebnfph{cond}@  or @\ebnfph{cond}@ | not @\ebnfph{cond}@ | @\ebnfph{cond\_term}@
         |  @\ebnfph{cond\_term}@  @\ebnfph{cond\_op}@ @\ebnfph{cond\_term}@
@\ebnfph{cond\_term}@ ::= @\ebnfph{python\_expression}@
@\ebnfph{cond\_op}@ ::= < | > |  = | in 
\end{lstlisting}
\vspace{-0.55em}
\begin{lstlisting}
@\ebnfph{dist}@ ::= @\ebnfph{var}@ over @\ebnfph{python\_expression}@ 
\end{lstlisting}
\end{minipage}
\caption{Syntax of \tool. Brackets denote optional elements. Syntax is generally python based.}
\label{fig:syntax}
\end{figure}

\section{The LMQL Language} \label{sec:language}

\begin{wrapfigure}{r}{0.45\textwidth}
\vspace{-1em}
\centering
\begin{subfigure}[t]{0.42\textwidth}
\scriptsize
\noindent
\texttt{A list of things not to forget when travelling:}\\
\texttt{- \textcolor{my-full-blue}{sun screen}}\\
\texttt{- \textcolor{my-full-blue}{beach towel}}\\
\texttt{The most important of these is \textcolor{my-full-blue}{sun screen}.}
\caption{With \lstinline{argmax} decoding.} \label{fig:interaction-trace:argmax}
\end{subfigure}
\begin{subfigure}[t]{0.42\textwidth}
\vspace{2em}
\scriptsize
\noindent
\texttt{A list of things not to forget when travelling:}\\
\texttt{- \textcolor{my-full-blue}{keys}}\\
\texttt{- \textcolor{my-full-blue}{passport}}\\
\texttt{The most important of these is \textcolor{my-full-blue}{sun screen}.}\\
\\
\texttt{A list of things not to forget when travelling:}\\
\texttt{- \textcolor{my-full-blue}{watch}}\\
\texttt{- \textcolor{my-full-blue}{hat}}\\
\texttt{The most important of these is \textcolor{my-full-blue}{keys}.}
\caption{With \lstinline{sample(n=2)} decoding.} \label{fig:interaction-trace:sample}
\end{subfigure}
\caption{The interaction trace for the query from \cref{fig:query-example:list} for different decoding methods.}
\label{fig:interaction-trace}
\end{wrapfigure}

Here we provide a high-level explanation of the syntax of \tool, before discussing the runtime and language semantics next. For concrete examples, consider the \tool programs given in \cref{fig:query-example}.

The grammar of \tool is shown in \cref{fig:syntax}. 
An \tool program has 5 parts: the decoder, the actual query, the \lstinline|from| clause specifying the queried model, the \lstinline|where| clause specifying constraints, and lastly a \lstinline|distribution| instruction.
The decoder and model are both specified by strings, while query and constraints are given in python syntax. We now explain these components in detail:

The \ebnfph{query} block models the interaction with the model. Informally it can be thought of as the body of a python function subject to some restrictions and additions: i) We do not allow the declaration of inner functions (however, imports can be made), and ii) Each top-level string is treated as a direct query to an LM.
These query strings allow for two specially escaped subfields, similar to python f-strings\footnote{https://peps.python.org/pep-0498}: 1) \lstinline|"{varname}"| recalls the value of a variable from the current scope. And  2.), \lstinline|"[varname]"| represents a phrase that will be generated by the LM, also called \emph{hole}.
When the language model generates values for these holes, they will be subject to the constraints defined in the \lstinline|where| clause of the query. Under these constraints, the decoding procedure specified by \ebnfph{decoder} (disussed next) will be used.
Once decoding finishes, a corresponding variable will be created in the scope of the query program and assigned this value. If a variable with the same name already exists, it will be overwritten.

\ebnfph{decoder} denotes the decoding procedure employed by the \tool runtime when solving the query.
The presented version of \tool enables \lstinline{argmax}, \lstinline{sample} and \lstinline{beam}.
\lstinline{argmax} and \lstinline{sample} work as discussed in \cref{sec:background}. 
\lstinline{beam} however, denotes a novel procedure called \emph{scripted beam search} which performs beam search jointly over all holes and control flow. We discuss this further in \cref{sec:query}. Once completed, the result of a query program is comprised of a number of things: It contains the \emph{interaction trace}, that is, the whole text transcript of the \tool query with the answers of the LM in the holes substituted.
Further, the set of all hole variables is accessible, allowing clients to directly access specific parts of the LM response.
In case of \lstinline{sample} and \lstinline{beam}, the parameter $n$ specifies the number of samples or beams respectively. In this case, $n$ interaction traces with the respective variables will be returned.
In practice, we allow further parameters to the decoder to be specified, e.g. the temperature $\tau$, but omit them here in favor of readability.

To illustrate queries and decoding, consider \cref{fig:query-example:joke} which utilizes a query purely made from strings, and \cref{fig:query-example:list} which utilizes a combination of strings and control flow. An corresponding interaction trace is shown in \cref{fig:interaction-trace}. Note how in the program on the right, \lstinline|THING| is reassigned on each iteration of the loop, which is in line with the semantics of python.

\lstinline|from |\ebnfph{model} denotes which LM to use. In the presented implementation, \ebnfph{model} denotes a string identifying a text generation model from the popular Hugging Face Model repository \cite{huggingface_models} or a model available via the OpenAI API \cite{BrownMRSKDNSSAA20}, like the GPT \cite{BrownMRSKDNSSAA20} family. However, this can also be extended to other local models or API backends.

\lstinline|where |\ebnfph{condition} places constraints on the \lstinline|[varname]| hole variables, thereby constraining the language model in what it can generate. Constraints can be an arbitary conjunction or disjunction of \ebnfph{cond\_expr} which allow comparison ($<$, $>$, $=$) and membership (\lstinline{in}) checks between standard python expressions. Note that, as hole variables are added to the scope of the query program, they can also be referenced there. We allow any deterministic pure python function along with constants.
We distinguish, for reasons discussed in \cref{sec:constraint_decoding}
, built-in functions (discussed next) and user-defined functions, which also includes standard python built-ins.
If we invoke the LM multiple times for the same variable, i.e., \texttt{THING} in \cref{fig:query-example:list}, the constraints apply to all intermediate values.

Lastly, \lstinline|distribute |{}\ebnfph{var}{}\lstinline| in |{}\ebnfph{python\_expression} is an optional instruction that can be added to augment the returned result. Here, \ebnfph{var} \emph{must} refer to the last variable in the query and the python expression to a set (or other iterable). We will refer to this set as the support of the distribution.
\begin{wrapfigure}[8]{r}{0.46\textwidth}
\scriptsize
\begin{minipage}{0.43\textwidth}
\noindent
\texttt{A list of things not to forget when travelling:}\\
\texttt{- \textcolor{my-full-blue}{sun screen}}\\
\texttt{- \textcolor{my-full-blue}{beach towel}}\\
\texttt{The most important of these is $\begin{cases} \text{\textcolor{my-full-blue}{sun screen}} & 65\%\\ \text{\textcolor{my-full-blue}{beach towel}} & 35\% \end{cases}$.}
\end{minipage}
\hfill
\vspace{-1em}
\caption{Continuation of the example from \cref{fig:query-example:list} and \cref{fig:interaction-trace:argmax} when appending \lstinline{distribute ITEM over things} to the query.}
\label{fig:interaction-trace-distribute}
\end{wrapfigure}

For queries with \lstinline|distribution| clause, the interaction trace will only be evaluated up to prior to the last hole according to the specified decoding method. In addition to the holes decoded so far and the interaction trace, the last variable is not decoded, but rather the probability distribution over support. Thus, for every value in the support the likelihood of this output is evaluated.
\cref{fig:interaction-trace-distribute} shows this for the example from \cref{fig:query-example:list}. In this case, the interaction trace up to the brace is produced, as well as the distribution over the possible values after.
This is particularly useful to encode classification tasks such as sentiment analysis, where the downstream user is interested in the probability distribution over e.g. $\{$\texttt{POSITIVE}, \texttt{NEGATIVE}$\}$.

\begin{wrapfigure}[4]{r}{0.6\textwidth}
\vspace{-1.2em}
\scriptsize
\begin{tabular}{ll}
	$[w_1, \dots w_k] \leftarrow \text{\texttt{words}(\ebnfph{var})}$  &\textcolor{gray}{//splits \ebnfph{var} into words $w_1, \dots w_k$}\\
	$[s_1, \dots s_k] \leftarrow \text{\texttt{sentences}(\ebnfph{var})}$ &\textcolor{gray}{//splits \ebnfph{var} into sentences $s_1, \dots s_k$}\\
	$b \leftarrow \text{\texttt{stop\_at}(\ebnfph{var}, t)}$ \hspace{2em} &\textcolor{gray}{//indicates if \ebnfph{var} ends in token or string $t$}\\
\end{tabular}
\vspace{-1.4em}
\caption{Built-in functions of \tool.}
\label{fig:builtin}
\end{wrapfigure}

\subsection{Built-in Functions}
In the \lstinline|where| clause, we support a set of built-in functions in addition to standard python code. For instance, we implement the functions \lstinline{words}, \lstinline{sentences} that, given a string or token representation, convert it to the desired representation.
To enable users to explicitly define stopping criteria, we also provide \lstinline{stops_at}, which can be used to provide constraints within the \lstinline{where} clause. \lstinline{stops\_at(}{}\ebnfph{var}{}\lstinline{, }\ebnfph{str}{}\lstinline{)} expresses that when the variable \ebnfph{var} is decoded it should stop decoding of the variable when the specified phrase is encountered. For similar purposes we provide \lstinline{len} (not shown), which overloads its default python counterpart with the comparable functionality -- it returns the length of a string (or iterable). For these designated, built-in functions, we implement additional semantics, required for the efficient output validation and the generation of decoding masks, as discussed in \cref{sec:constraint_decoding}.
\ifbool{extended}{
We provide further implementation details in \cref{sec:implementation}.
}{}

\section{The LMQL runtime: Query Execution \& Decoding} \label{sec:query}

\begin{wrapfigure}[18]{r}{0.59\textwidth}
        \centering
    \vspace{-1.45em}
    \scalebox{0.9}{
    \begin{minipage}{0.64\textwidth}
        \centering
    \begin{algorithm}[H]
    \SetAlgoLined
    \LinesNumbered
    \DontPrintSemicolon
    \KwIn{string $s$, trace $u$, scope \scope,  language model $\vf$}
    \uIf{$s$ contains $[\text{\ebnfph{<varname>}}]$}
    {
        $s_{\text{pre}}, \text{varname}, s_{\text{post}} \gets \text{unpack}(s)$ \;
        \tcp*{e.g. "a [b] c" $\rightarrow$ "a ", "b", " c"}
        $u \leftarrow u s_{\text{pre}}$ \tcp*{append to trace}
        $v \leftarrow decode(\vf, $u$)$ \tcp*{use the LM for the hole} \label{alg:eval-string:decode}
        $\scope[\text{varname}] \leftarrow v$ \tcp*{updated scope}
        $u \leftarrow u v$ \tcp*{append to trace}
    }
    \uElseIf{$s$ contains $\{\text{\ebnfph{varname}}\}$}
    {
        $\text{varname}\gets \text{unpack}(s)$ \tcp*{e.g. "\{b\}" $\rightarrow$ "b"}
        $v \leftarrow \scope[\text{varname}]$ \tcp*{retrieve value from scope}
        $s \leftarrow \text{subs}(s, \text{varname}, v)$ \tcp*{replace placeholder with value}
        $u \leftarrow u s$ \tcp*{append to trace}
    }
    \Else{ 
        $u \leftarrow u s$ \tcp*{append to trace}
    }
    \caption{Evaluation of a top-level string $s$}
    \label{alg:eval-string}
    \end{algorithm}
    \end{minipage}
    }
\end{wrapfigure}

We now discuss how the \tool runtime executes a query.
To this end we consider the execution of the \ebnfph{query} as a python program.
In this execution we assume that, i) functions are pure and do not cause side effects, ii) functions are deterministic.
Ignoring the constraints in \lstinline|where| for now, the \ebnfph{query} is executed line-by-line like a regular python function with one difference:
At the beginning of the execution, the interaction trace $u \leftarrow \emptystring$ is initialized to the empty string $\emptystring$. Whenever a top-level string $s$ is encountered in the program execution, the procedure in \cref{alg:eval-string} is evoked.
If a hole \texttt{[\text{\ebnfph{varname}}]} is encountered, the string $s$ is split into the text
preceeding the hole $s_\text{pre}$, the variable name and the text after the hole $s_\text{post}$.
$s_\text{pre}$ is directly appended to $u$\footnote{As is common we use multiplication to denote string concatenation and write $uv$ to denote the concatenation of $u$ and $v$.}
, which is then used to $decode$ a sequence $v$ to fill the hole from the LM $\vf$. 
This string is then assigned to \ebnfph{varname} in the scope $\scope$ of the python program.
If $\{\text{\ebnfph{varname}}\}$ is encountered, the value of \ebnfph{varname} is retrieved from scope $\scope$ and the placeholder is replaced with the value. In all cases the string $s$ (with the decoded or substituted text replaced) is added to $u$. 
Note that, for simplicity in \cref{alg:eval-string} we assume that there is at most one hole or placeholder in a string $s$.
In practice we allow multiple. 
Formally this can be thought of as splitting $s$ into a list of strings and then applying \cref{alg:eval-string} to each resulting string.
We illustrate this execution model in \cref{fig:example-execution} where we list the evaluation steps of the first 7 lines of \cref{fig:query-example:list}.
The first two lines are directly appended to the interaction trace $u$, while the next two lines (emitted inside the for loop) contain holes, which invokes the $decode$ function, discussed next.
\paragraph{Decoding Algorithm}
When $decode$ is invoked, the decoding procedure declared at the top of the \tool program is utilized to generate a value for the placeholder. 
Decoding is usually stopped i) when an end-of-sequence token is produced, or ii) when no more tokens can be produced due to the given constraints (discussed in \cref{sec:constraint_decoding}). 
For decoding algorithms that just output a single possible sequence, such as \lstinline{argmax} or \lstinline{sample(n=1)} the straightforward combination of \cref{alg:eval-string} and standard decoding function denotes the full end-to-end decoding procedure. However, a particular case occurs if multiple results are produced, e.g., \lstinline{sample(n=}{}\ebnfph{int}{}\lstinline{)} produces $n$ possible interaction traces $u$.

In this case, we track $n$ parallel execution of the query program, where $decode$ acts non-deterministically. In practice, we execute all calls in lockstep, such that we can batch calls to the underlying model $\vf$ and therefore improve efficiency. In \cref{alg:eval-string} we assume that $decode$ returns an already de-tokenized string $v$, not a sequence of tokens. 

\ifbool{extended}{
\paragraph{\bslong}
With the decoder \lstinline{beam(n=}{}\ebnfph{int}{}\lstinline{)}, the query is executed similarly: When the first hole in the interaction is encountered, $n$ beams (with their estimated probabilities) are created and retained. Each beam then corresponds to an interaction trace $u$, for which the query function is executed independently. Note that each $u$ might cause different control flow.
\ifbool{extended}{
\begin{wrapfigure}[13]{r}{0.38\textwidth}
}{
\begin{wrapfigure}[12]{r}{0.38\textwidth}
}
\centering
\ifbool{extended}{
\vspace{0.75em}
}{
\vspace{-0.25em}
}
\scalebox{0.9}{
\begin{minipage}{0.42\textwidth}
\centering
\begin{algorithm}[H]
\SetAlgoLined
\LinesNumbered
\DontPrintSemicolon
\KwIn{trace $u$, scope $\sigma$, LM $f$}
\KwOut{decoded sequence $v$}
$v \leftarrow \emptystring$ \label{alg:decoding:init} \;
\While{\textbf{True}}{
$\vm \leftarrow \text{compute\_mask($u$, $\sigma$, $v$)}$\;
	\lIf{$\bigwedge_i (m_i = 0)$}{ \textbf{break} }
	$\vz \leftarrow \sfrac{1}{Z}\; \cdot \vm \odot \softmax(\vf(uv))$ \;%
$t \leftarrow \text{pick}(z)$ \;
\lIf{$t = \eos$}{ \textbf{break} }
$v \leftarrow vt$\;
}
\caption{Decoding}
\label{alg:decoding}
\end{algorithm}
\end{minipage}
}
\end{wrapfigure}

Further, since we only consider the top $n$ beams at each step, we also only continue query execution for the top $n$ beams. Interaction traces that are discarded along the way, are pruned and not extended further. On termination, the overall query result corresponds to final top $n$ interaction traces.
}{

\paragraph{\bslong}
With the decoder \lstinline{beam(n=}{}\ebnfph{int}{}\lstinline{)}, the query is executed similarly: When the first hole in the interaction is encountered, $n$ beams (with their estimated probabilities) are created and retained. Each beam then corresponds to an interaction trace $u$, for which the query function is executed independently. Note that each $u$ might cause different control flow. Further, since we only consider the top $n$ beams at each step, we also only continue query execution for the top $n$ beams. Interaction traces that are discarded along the way, are pruned and not extended further. On termination, the overall query result corresponds to final top $n$ interaction traces.
}

\begin{figure}
\scriptsize
\centering
\begin{tabular}{lll}
\multicolumn{1}{c}{\textbf{line}} &
\multicolumn{1}{c}{\textbf{update}} &
\multicolumn{1}{c}{\textbf{state after update}}\\

1 &
& \begin{tabular}{@{}l@{}l@{}}
$u =$ &$\emptystring$\\
$g =$ &$\{ \}$
\end{tabular}\\[0.0em]
\\[-0.50em]\hline\\[-0.30em]
2 &
\begin{tabular}{l}
$s \leftarrow \text{\texttt{"A\textvisiblespace{}list\textvisiblespace{}of\textvisiblespace{}things\textvisiblespace{}not\textvisiblespace{}to\textvisiblespace{}forget\textvisiblespace{}when"}}$\\
$u \leftarrow us$
\end{tabular}
& \begin{tabular}{@{}l@{}l@{}}
$u = $ & $\text{\texttt{"A\textvisiblespace{}list\textvisiblespace{}of\textvisiblespace{}things\textvisiblespace{}not\textvisiblespace{}to\textvisiblespace{}forget\textvisiblespace{}when"}}$\\
$g =$ & $\{ \}$
\end{tabular}\\[0.0em]
\\[-0.50em]\hline\\[-0.30em]

3 &
\begin{tabular}{l}
$s \leftarrow \text{\texttt{"travelling:\textvisiblespace{}\textbackslash{}n"}}$\\
$u \leftarrow us$\\
\end{tabular}
& \begin{tabular}{@{}l@{}l@{}}
$u = $ & $\text{\texttt{"A\textvisiblespace{}list\textvisiblespace{}of\textvisiblespace{}things\textvisiblespace{}not\textvisiblespace{}to\textvisiblespace{}forget\textvisiblespace{}when travelling\textvisiblespace{}\textbackslash{}n"}}$\\
$g =$ & $\{ \}$
\end{tabular}\\[0em]
\\[-0.50em]\hline\\[-0.30em]

4, $i=0$ &
\begin{tabular}{l}
$s \leftarrow \text{\texttt{"-\textvisiblespace{}[THING]\textbackslash{}n"}}$\\
$s_{\text{pre}}, \text{varname},  s_{\text{post}} \leftarrow \text{\texttt{"-\textvisiblespace{}"}}, \text{\texttt{THING}}, \text{\texttt{\textbackslash{}n}}$\\
$u \leftarrow u s_{\text{pre}}$\\
$v \leftarrow \text{\textcolor{my-full-blue}{\texttt{"sun\textvisiblespace{}screen"}}} = \texttt{decode($\vf$, $u$)}$\\
$u \leftarrow u v s_{\text{post}}$\\
$g[\text{varname}] \leftarrow v$
\end{tabular}
& \begin{tabular}{@{}l@{}l@{}}
$u = $ & $\text{\texttt{"A\textvisiblespace{}list\textvisiblespace{}of\textvisiblespace{}things\textvisiblespace{}not\textvisiblespace{}to\textvisiblespace{}forget\textvisiblespace{}when travelling\textvisiblespace{}\textbackslash{}n}}$\\
&$\text{\texttt{-\textvisiblespace{}sun\textvisiblespace{}screen\textbackslash{}n"}}$\\
	$g =$ & $\{i=0, \text{THING}=\text{\texttt{"sun\textvisiblespace{}screen"}}, $\\
	&  $\text{things}=[\text{\texttt{"sun\textvisiblespace{}screen"}}]\}$
\end{tabular}\\[0em]
\\[-0.50em]\hline\\[-0.30em]

4, $i=1$ &
\begin{tabular}{l}
$s \leftarrow \text{\texttt{"-\textvisiblespace{}[THING]\textbackslash{}n"}}$\\
$s_{\text{pre}}, \text{varname},  s_{\text{post}} \leftarrow \text{\texttt{"-\textvisiblespace{}"}}, \text{\texttt{THING}}, \text{\texttt{\textbackslash{}n}}$\\
$u \leftarrow u s_{\text{pre}}$\\
$v \leftarrow \text{\textcolor{my-full-blue}{\texttt{"beach\textvisiblespace{}towel"}}} = \texttt{decode($\vf$, $u$)}$\\
$u \leftarrow u v s_{\text{post}}$\\
$g[\text{varname}] \leftarrow v$
\end{tabular}
& \begin{tabular}{@{}l@{}l@{}}
$u = $ & $\text{\texttt{"A\textvisiblespace{}list\textvisiblespace{}of\textvisiblespace{}things\textvisiblespace{}not\textvisiblespace{}to\textvisiblespace{}forget\textvisiblespace{}when travelling\textvisiblespace{}\textbackslash{}n}}$\\
&$\text{\texttt{-\textvisiblespace{}sun\textvisiblespace{}screen\textbackslash{}n}}$\\
&$\text{\texttt{-\textvisiblespace{}beach\textvisiblespace{}towel\textbackslash{}n"}}$\\
$g =$ & $\{i=1, \text{THING}=\text{\texttt{"beach\textvisiblespace{}towel"}},$\\
	&  $\text{things}=[\text{\texttt{"sun\textvisiblespace{}screen"}}, \text{\texttt{"beach\textvisiblespace{}towel"}}]\}$
\end{tabular}\\[-0.5em]

\end{tabular}
\caption{Example execution of the first 7 lines in \cref{fig:query-example:list}. Text generated by the LM $\vf$ in \textcolor{my-full-blue}{blue}.} \label{fig:example-execution}
\end{figure}

\paragraph{Language Model Integration} As shown in our decoding algorithm, we do not impose any restrictions on language model $\vf$, apart from being able to access the resulting distribution over vocabulary tokens. As, fundamentally, this is the core interface of most language models, we can easily integrate them without further changes. In fact, we implement \cref{alg:decoding} based on the \texttt{generate()} function from the HuggingFace \texttt{transformers} \cite{wolf2020transformers} package. Because of this, \tool{} already supports the large number of LMs available in the HuggingFace Model repository \cite{huggingface_models}.

\paragraph{Performance Considerations}
For large $n$ the execution of query code for multiple samples or beams can potentially be expensive, especially if compute-intensive functions are invoked on top of the LM output. However, as we assume functions to be pure and deterministic, results can be cached based on the function arguments, therefore greatly decreasing the total number of required function invocations. 
We also note that LMQL can evaluate constraints, control flow and compute token masks in parallel with the LM predicting its next token distribution. Only then, token masks need to be applied to continue text generation. This means that the LMQL runtime can run in lock-step with the LM, without incurring additional latency. One exception from this is if query execution itself entails blocking and interactive behavior such as web requests. In these cases, however, the latency is inherent due to the dependency on external systems, not a property of LMQL. In case the LM runs remotely on a different machine, LMQL additionally employs speculative LM prediction with asynchronous token masking, which helps to lower latency induced by network communication.

\paragraph{Decoding Internals}
\cref{alg:decoding} shows the internals of a decoding procedure (\text{decode} in \cref{alg:eval-string}) for a single sample or beam.
Here, the goal is to build up the string $v$, initialized to the empty string $\emptystring$ in \cref{alg:decoding:init}, by appending tokens $t$ to it. For each new token we compute a mask $\vm$ over the vocabulary, which only allows tokens that result in legal sequences, e.g., those that satisfy our \lstinline|where| constraints. If we can not produce any further tokens (i.e., $\bigwedge_i m_i = 0$) we stop the decoding procedure. Otherwise, we re-normalize $\vm \odot \vz$ into a probability distribution, i.e. a vector where entries add up to 1, by dividing it by $Z = \sum_i (\vm \odot \vz)_i$.
The function pick depends on the exact decoding algorithm (e.g. \lstinline|argmax|, \lstinline|sample|, \lstinline|beam|) and is used to pick a token $t$ from the distribution.
If we obtain an end-of-sequence \eos token we stop.
If we return early because no legal tokens are available, we are unable to find a response to the query that fulfils the constraints. If we return at \eos, we found a legal decoding. Next, we discuss how to compute the mask $\vm$, such that the specified constraints can be enforced during decoding.

\newcommand{\final}{\text{\scshape{Final}}}
\newcommand{\follow}{\text{\scshape{Follow}}}

\section{Validation and Constraint Decoding} \label{sec:constraint_decoding}

In this section we show how our decoding procedure can be extended to handle validation and constrained decoding. In particular, we discuss how the constraints from the \lstinline{where} clause can be used to automatically and efficiently find decoding masks for each step of decoding. Our main contribution to this end is a purpose-designed, eager execution model that supports partial evaluation and lookahead. To motivate this, we first discuss a naive solution and then introduce the idea of \textit{final semantics} and \cd{}s, the two abstractions at the core of our evaluation model. 

\paragraph{Naive Approach} 

\begin{wrapfigure}[15]{r}{0.50\textwidth}
	\vspace{-1.2em}
	\centering
	\scalebox{0.9}{
	\begin{minipage}{0.56\textwidth}
	\centering
	\begin{algorithm}[H]
	\SetAlgoLined
	\LinesNumbered
	\DontPrintSemicolon
	\KwIn{trace $u$, scope $\scope$, language model $f$}
	\KwOut{decoded sequence $v$}
	\SetKwRepeat{Do}{do}{while}
	\SetKwProg{Fn}{Function}{}{}
	\Fn{decode\_step($f$, $u$, $v$)}{
	$\vz \leftarrow \softmax(\vf(uv))$ \;
	$\vm \leftarrow \mathbf{1}^{|\voc|}$ \;
	\Do{$\bigvee_i m_i = 1$}{
		$t \leftarrow \text{pick}( \sfrac{1}{Z} \cdot \vm \odot \vz)$ \;
	\lIf{$t \neq \eos$}{decode\_step($u$, $v$, $vt$)}
	\lElseIf{$t = \eos \land \text{check}(u, vt)$}{\KwRet $v$}
	\lElse { $\vm[t] \leftarrow 0$ }
	}
	}
	decode\_step($f$, $u$, \emptystring) \;
	\caption{Naive Decoding with Constraints}
	\label{alg:naivedecoding}
	\end{algorithm}
	\end{minipage}
	}
\end{wrapfigure}

We first consider a naive approach to constrained decoding, outlined in \cref{alg:naivedecoding}.
Here, similar to \cref{alg:decoding}, we start with an empty string $v$ and append tokens. However, we don't assume a function compute\_mask and thus apply a backtracking-based approach, where we generate sequences up to the \eos token and then check if $uv$ satisfies our constraints.
Checking the constraints, denoted as $check$, is easy as it just amounts to the evaluation of an expression.

Note that here we assume that $uv$ is sufficient to check the constraints, at least up to the hole corresponding to $v$.
If this is not possible, we would need to perform the generation sequence for the sequence of all holes, advancing to the next one, once \eos is produced, but potentially backtracking over all, if validation fails at some point later on.

This strategy leads to multiple problems:
First, navigating the search space of sequences using backtracking is computationally expensive, especially when considering that the search space of LMs (even when trained well), is still a combinatorial explosion due to the many likely continuations of any given sequence. Second, querying the LM can be very expensive. State-of-the-art models often require high-end GPUs or are only available as API-gated, paid services. Thus, every token that is generated and later dismissed incurs a significant computational or financial cost.

With this in mind, we implement eager, partial evaluation semantics that model not only whether or not an expression holds, but also whether the expression can be guaranteed to never hold for any possible continuation of the currently-generated sequence. This allows us to terminate early if validation already provides a definitive result. Further, our semantics enable us to automatically compute a subset of next tokens that are guaranteed to violate the expression. Using this token set, we can effectively prune the search space of an LM and prevent the costly generation of invalid sequences before they are even generated.

\subsection{Partial Evaluation}
\begin{table}
    \centering
    \renewcommand*{\arraystretch}{1.1}
    \caption{Evaluation rules for $\final$ semantics for the core operators of \tool.} \label{tab:final}
    \vspace{-0.75em}
    \scriptsize
    	\hfill
        \begin{tabular}{rl}
	    \textbf{expression} & \textbf{\scshape Final}$[\cdot\;;\sigma]$\\[0.8em]

            \ebnfph{const} & $\fin$\\
    
            python variable \ebnfph{pyvar} & $\var$\\
            previous hole \ebnfph{var} & $\fin$ \\
            current var \ebnfph{var} & \inc\\
		future hole \ebnfph{var} & $\inc$\\[1.2em]

            \texttt{words($v$)} & $\final[v]$\\
            \texttt{sentences($v$)} & $\final[v]$\\
	    \texttt{len($v$)} & $\final[v]$\\[1em]

		number equality $n$ \texttt{==} $m$ & \hspace{0em}\makecell[l]{$\begin{cases}
                \fin & \text{if } \final[n] = \fin\\ 
		    &\;\land\; \final[m] = \fin\\
		\var & \text{else}\end{cases}$\\
            }\\[1em]

            string equality $x$ \texttt{==} $y$ & \hspace{0em}\makecell[l]{$\begin{cases}
                \fin & \text{if } \final[x] = \fin \\
		             &\;\land\; \final[y] = \fin\\
                \fin & \exists i \bullet x[i] \neq y[i]\\
                     &\;\land\; \final[x] \neq \var\\
                     &\;\land\; \final[y] \neq \var\\
                \var & \text{else}\end{cases}$\\
            }\\[1em]

            function \texttt{fn($\tau_1, \dots, \tau_k$)}
                 &
                \makecell[l]{$\begin{cases}\fin & \text{if } \bigwedge_{i=1}^k a(\tau_i) = \fin \\ \var  & \text{else}\end{cases}$\\
                }\\[1em]
 
        \end{tabular}
	\hspace{0em}
	\hfill
	\vline
	\hfill
        \begin{tabular}{rl}
	    \textbf{expression} & \textbf{\scshape Final}$[\cdot\;;\sigma]$\\[0.8em]

            \texttt{stop\_at(var, $s$)} & \makecell[l]{$\begin{cases}
		    \fin & \text{if } \eval{var}$\texttt{.endswith}$(s)\\
		    & \;\wedge\; \final[var] = \inc\\ 
                \var & \text{else}\end{cases}$\\
            }\\[1em]

            \makecell[r]{$x$ \texttt{in} $s$\\for strings $x,s$} & \makecell[l]{$\begin{cases}
                \fin & \text{if $x$ \texttt{in} $s$} \land \final[x] = \fin \\
                     & \;\land\; \final[s] = \inc \\ 
                \var & \text{else}\end{cases}$\\
            }\\[1em]
            \makecell[r]{$e$ \texttt{in} $l$\\for string $e$, set $l$} & \makecell[l]{$\begin{cases}
                \fin & \text{if } \not\exists i \in l \bullet i\texttt{.startswith(e)}\\
                     &\; \land \; \final[x] \in \{\inc,\fin\}\\
                     &\; \land \; \final[l] = \fin\\ 
                \var & \text{else}\end{cases}$\\
            }\\[1em]
             $x$ \texttt{<} $y$ & \makecell[l]{$\begin{cases}
		    \fin & \text{if } $x$ < $y$ \land \final[x] \in \{\dec, \fin\}\\
		    &\;\land \;\final[y] \in \{\inc, \fin\}\\
                \var & \text{else}\end{cases}$\\
            }\\[1em]

             $a$ \texttt{and} $b$ & \makecell[l]{$\begin{cases}
		    \fin & \text{if } \exists v \in \{a, b\} \bullet \eval{v}^F = \fin(\bot)\\
		     \fin & \text{if } \forall v \in \{a, b\} \bullet \eval{v}^F = \fin(\top)\\
                \var & \text{else}\end{cases}$\\
            }\\[1em]
             $a$ \texttt{or} $b$ & \makecell[l]{$\begin{cases}
		    \fin & \text{if } \exists v \in \{a, b\} \bullet \eval{v}^F = \fin(\top)\\
		     \fin & \text{if } \forall v \in \{a, b\} \bullet \eval{v}^F = \fin(\bot)\\
                \var & \text{else}\end{cases}$\\
            }\\[1em]
            \texttt{not $a$  } & $\final[a]$\\

        \end{tabular}
	\hfill\\
\vspace{-1.5em}
\end{table}

Given some expression $e$ occurring in the \lstinline|where| condition, some interaction trace $u$ and some global scope \scope, we define the evaluation semantics of $\eval{e}$ on multiple levels:

\paragraph{Value Semantics} First, we interpret $e$ on a value level, meaning we define $\eval{e}$ as the value of evaluating $e$ as a python expression, given the variable values assigned in $\sigma$. 

\paragraph{Final Semantics} In addition to value semantics, we define so-called \emph{final semantics} as a function $\final[e; \scope]$. The function $\final$ annotates each computed value with one of the annotators $\mathcal{A} = \{ \fin, \var, \inc, \dec \}$. Depending on the annotator, the value of an expression $e$, as decoding progresses is either considered $\fin$ (it will retain a fixed value), $\var$ (its value may still change), $\inc$ (its value will monotonically increase) or $\dec$ (its value will monotonically decrease). For the latter two, we consider monotonicity both in a numerical sense and in a set theoretic sense (e.g. growing sets, append-only strings). Based on this, $\final$ can be computed by applying it recursively to the intermediate results of a top-level expression $e$, as defined by the rules in \cref{tab:final}. 

\paragraph{Notation} In the following, we use the short-hand notation $\final[e]$ instead of $\final[e; \scope]$, as we assume that the scope is always the global scope. Further, we will sometimes refer to value and final semantics jointly, i.e., we will denote the value of an expression $e$ as $\eval{e} = v$ and $\final[e] = \fin$, simply as $\eval{v}^F = \fin(v)$. For boolean expressions we let $\top$ denote \lstinline|True| and $\bot$ \lstinline|False|.

\paragraph{Application} Using $\final$, we can evaluate \lstinline|where| constraints, even on outputs that are only partially available, i.e. a currently generating sequence. For this, we evaluate all (sub-)expressions, as far as possible. For expressions that depend on future hole values, we set their result to \texttt{None} and define all other operators to be tolerant of that. For instance, given some validation constraints $a \wedge b$, where $b$ cannot be determined yet, we can evaluate $a$ and return \texttt{False} if $a$ evaluates to $\fin(\bot)$. This is possible, as $\fin$ indicates that no matter the value of $b$, $a$ will always evaluate to $\bot$, even as more tokens of the generated sequence are revealed. 

\paragraph{Eager Validation} Final semantics provide an abstraction that enables us to implement more aggressive short-circuiting over validation conditions. These can be executed on each new token rather than waiting for the entire sequence to be generated. Using this, validation can be applied more eagerly, detecting invalid sequences before they are completed. However, final semantics do not help us to mask any next tokens in the decoding function. To enable this, we additionally introduce a third level of evaluation semantics, which we call \emph{follow semantics}, discussed next.

\subsection{Generating Token Masks using \cd{}s}

Provided that we can now evaluate \lstinline|where| conditions eagerly on every new token, the task that remains is to construct a token mask, that allows us to soundly identify tokens that are guaranteed to violate the condition when chosen next by the $decode$ function. To this end, we introduce a novel abstraction called \cd{}s.
\paragraph{Follow Maps} A follow map is a function $\cd{}(u,t)$ that takes a partial interaction trace $u$ and a token $t$ as input, and approximates the future value of some expression during validation, given $ut$ is validated next. We implement \cd{}s for all supported operators in \tool, and show a subset of the rules in \cref{tab:follow}. As shown, per operation, only a few rules are required. Note that a \cd{} always also produces a final annotator, but we only show them if the standard rules from \cref{tab:final} do not apply. 
Based on this, we define a recursive $\text{\textbf{\follow}}[\text{\ebnfph{expr}}](u,t)$ operator that automatically constructs the \cd{} for a provided expression, considering the definitions in \cref{tab:follow} as its base cases. This is implemented by recursively applying case-wise composition to the follow maps of the respective sub-expressions. Using \follow, we obtain an all-encompassing follow map for the entire validation expression. By inspecting the sub-cases of the resulting \cd{}, we then identify tokens that are guaranteed to violate the expression, which allows us to generate a decoding mask.

\paragraph{Example} Assume that we have the constraint \lstinline|TEXT in ["Stephen Hawking"]|  and that we are currently decoding hole variable \lstinline|TEXT|.
So far it has been assigned the value \texttt{"Steph"}. Using the rules in \cref{tab:follow}, we can construct a \cd{}:
$$
\follow[\text{\lstinline|TEXT in ["Stephen Hawking"]|}](\text{\texttt{"Steph"}},t) = \begin{cases}
\fin(\top) & \text{if } t = \text{\texttt{"en Hawking"}} \\
\fin(\bot) & \text{else}
\end{cases}
$$
The \cd{} returns $\fin(\top)$ if the following sequences matches \texttt{"en Hawking"} and $\fin(\bot)$ otherwise. During decoding, this can be translated into a token mask, as we know that tokens other than prefixes of \texttt{"en Hawking"} will definitively ($\fin$) violate our constraint. To enforce this, we derive a mask vector $\vm$ that only allows possible first tokens of \texttt{"en Hawking"} to be generated.

\begin{table}
\centering
\renewcommand*{\arraystretch}{1.1}
\caption{$\cd{}$ for the core set of operators supported in \tool. Whenever the final semantics of follow values do not align with standard behavior, we explicitly include final annotations. $v$ denotes the currently generated stream of tokens directly or as included as suffix in other computed values. $\eval[v \leftarrow vt]{\;\cdot\;}$ denotes evaluation under an updated scope, where $v$ is extended by $t$.} \label{tab:follow}
\vspace{-0.75em}
\scriptsize
\begin{tabular}{rl}
	 \textbf{expression} & \textbf{\follow}$[\cdot](u,t)$\\[1em]

		\ebnfph{const} & $\eval{\text{\ebnfph{const}}}$ \\
		\makecell[r]{python variable\\ \ebnfph{pyvar}} & $\eval[v \leftarrow vt]{\text{\texttt{pyvar}}}$\\
		previous hole  \ebnfph{var} & $\eval{\text{\ebnfph{var}}}$ \\
		current var  $v$ & $\begin{cases}
			\fin[v] & \text{if } t = \eos \\ 
			\inc[vt] & \text{else}
		\end{cases}$ \\
		future hole  \ebnfph{var} & \texttt{None}  \\[1em]

		\texttt{words($v$)} & 
		\makecell[l]{
		$\begin{cases}
			\fin[w_1, \dots, w_k] & \text{if } t = \eos  \\
			\inc[w_1, \dots, w_k] & \text{if } t = \text{\textvisiblespace} \\
			\inc[w_1, \dots, w_kt]   & \text{else}\\
		\end{cases}$ \\
		\quad where $w_1, \dots, w_k \leftarrow \eval{\text{\texttt{words($v$)}}}$
		}\\

		\texttt{sentences($v$)} & 
		\makecell[l]{
		$\begin{cases}
			\fin[s_1, \dots, s_k] & \text{if } t = \eos  \\
			\inc[s_1, \dots, s_k, t] & \text{if } s_k\text{\texttt{.endswith(".")}} \\
			\inc[s_1, \dots, s_kt]   & \text{else}\\
		\end{cases}$ \\
		\quad where $s_1, \dots, s_k \leftarrow \eval{\text{\texttt{sentences($v$)}}}$
		}\\

		\makecell[r]{\texttt{len($v$)}} & \makecell[l]{
		$\begin{cases}
			$\texttt{len}$(v)  & \text{if } t = \eos\\
			$\texttt{len}$(v) + 1 & \text{else}\\
		\end{cases}$
		}\\
		\makecell[r]{\texttt{len($l$)}\\over list $l$}  & \makecell[l]{
		$len(\eval[v \leftarrow vt]{l})$
		}\\
\end{tabular}
	\hspace{-0em}
	\hfill
	\vline
	\hfill
\begin{tabular}{rll}
	\textbf{expression} & \textbf{\follow}$[\cdot](u,t)$\\[1em]
		\texttt{fn($\tau_1, \dots, \tau_k$)} &
		$\text{\texttt{fn($\eval[v \leftarrow vt]{\tau_1}, \dots, \eval[v \leftarrow vt]{\tau_k}$)}}$
		\\[1em]
		
		\texttt{stop\_at($var$, s)} &  \makecell[l]{
		$\begin{cases}
			\fin(b) & \text{if } b \wedge \final[var] = \inc\\ 
			\var[l] & \text{else} \\
		\end{cases}$ \\
		where $b = \eval{var}\text{\texttt{.endswith}}(s)$
		}\\
	
		\makecell[r]{\texttt{x in }$s$\\for string $s$\\and constant $x$} & \makecell[l]{
			$\begin{cases}
				\top & \text{if \texttt{x in s}} \vee \texttt{x in $t$}\\
				\bot & \text{else}\\
		\end{cases}$
		}\\
		\makecell[r]{\texttt{x in }$l$\\for constant list/set $l$} & \makecell[l]{
			$\begin{cases}
				\fin[\top] & \text{if } \text{\texttt{t in l}}\\
				\var[\bot] & \text{if } \exists e \in l \bullet\\
				&\quad\text{\texttt{e.startswith($vt$)}}\\
				\bot & \text{else}\\
			\end{cases}$
		}\\
		\texttt{x $<$ y} & $\eval[v \leftarrow vt]{x} < \eval[v \leftarrow vt]{y}$ \\
		string comp. \texttt{a == $v$} & \makecell[l]{
			$\begin{cases}
				\fin[\top] & \text{if } vt =\text{\texttt{a}}\\
				\var[\bot] & \text{if } \text{\texttt{a.startswith($vt$)}}\\
				\bot & \text{else}\\
			\end{cases}$
		}\\
		number comp. \texttt{x == y} & $\eval[v \leftarrow vt]{x} = \eval[v \leftarrow vt]{y}$ \\
		 \texttt{a and b} & $\eval[v \leftarrow vt]{x} \texttt{ and } \eval[v \leftarrow vt]{y}$ \\
		 \texttt{a or b} & $\eval[v \leftarrow vt]{x} \texttt{ or } \eval[v \leftarrow vt]{y}$ \\
		 \texttt{not a} & $\texttt{not }\eval[v \leftarrow vt]{x}$ \\

\end{tabular}
\vspace{-1em}
\end{table}

\paragraph{Subtokenization}
To determine the set of valid sub-tokens that align with a follow continuation like \texttt{"en Hawking"}, we have to consider that most sub-word vocabularies allow for more than one factorization of a provided string into subtokens. This means, to determine the set of valid prefixes, we have to scan the entire vocabulary for possible prefix tokens and include all of them in the token mask, to maintain full expressiveness when it comes to the concrete choice of sub-word tokens that are used to encode a valid continuation. Here, we can assume that \follow{} is only ever applied to program states, where all model-generated values align with sub-token boundaries, because validation is performed eagerly on each new token, enabling this kind of prefix matching.

\paragraph{Soundness} While a perfect next-token validator is desirable, this can be hard to achieve, especially with constraints that rely on forward references. For this reason, we do not require $\follow$ to return \cd{}s that mask out all tokens that will violate our constraints (i.e. \textit{completeness}). Instead, we focus on \textit{sound} approximation: Given some boolean \lstinline|where| condition $e$ and the currently decoded hole variable $v$ (cf. \cref{alg:eval-string}), we consider the $\follow$ operator to be sound if and only if:
\begin{equation}
\forall t \in \mathcal{V} \bullet (\follow[{e}])(u,t) = \fin(\bot) \Rightarrow \eval[v \leftarrow ut]{e} = \fin(\bot)
\label{eq:soundness-constraint}
\end{equation}
In other words, if the returned \cd{} indicates that the next token $t$ is guaranteed to violate the condition $e$, then the condition $e$ must evaluate to $\fin(\bot)$ when $t$ is picked in the next decoding step. While this potentially over-approximates the set of valid tokens, it guarantees that we will never mask out any tokens that may actually be valid. Note also, how we rely on final semantics, i.e. $\fin(\bot)$, to express that a token will lead to a definitive violation of our constraints, and not just a temporary one during generation.
While over-approximation enables soundness, it also implies that some constraints cannot be enforced eagerly. In these cases, \tool has to resort to backtracking to find a valid sequence. This limitation is in line with theoretical results, as token masking using follow maps is comparable to context-free parsing.

\paragraph{Brzozowski derivatives}
To provide another perspective on \cd{} soundness, consider Brzozowski derivatives \cite{brzozowski1964derivatives}:
For a language $S \in \Sigma^*$, i.e. a set of strings over the alphabet $\Sigma$, and prefix $u \in \Sigma^*$ the Brzozowski derivative $u^{-1}S = \{ v \in \Sigma^* \mid uv \in S \}$ denotes the set of postfixes such that the concatenation $uv \in S$. In our case we are interested in the possible sequences over the token vocabulary $\voc^*$. In particular, given some query $\mathcal{Q}$, we are interested in the subset $L_\mathcal{Q} \subseteq \voc*$, which we do not necessarily have in closed form, that contains all interaction traces that fulfill the constraints specified in $\text{\texttt{where}}_\mathcal{Q}$. If during an execution of $\mathcal{Q}$ we have a partial interaction trace $u$, then $u^{-1}L_\mathcal{Q}$ denotes all possible legal postfixes completing this interaction trace. Using this, we define the \emph{set of Brzozowski-admissible tokens} $T_\mathcal{Q} = \{t \in \voc \mid (ut)^{-1}L_\mathcal{Q}) \neq \emptyset \}$, which can be decoded in the next step such that legal continuations in $L_\mathcal{Q}$ exist , i.e. $T_\mathcal{Q}$ describes the set of legal tokens for the next decoding step, thus forming a decoding mask $M$. 

Given these definitions, the \cd{} and the \follow{} operator satisfy the following theorem:

\begin{theorem}
	\textit{(Brzozowski Soundness)} Given a query $\mathcal{Q}$, partial interaction trace $u$, and the corresponding set of allowed tokens $M := \{t \in \mathcal{V} \;|\; \follow[\text{\texttt{where}}_\mathcal{Q}](u, t) \neq \fin(\bot)\}$, it holds that $T_\mathcal{Q} \subseteq M$, where $T_\mathcal{Q}$ is the set of Brzozowski-admissible tokens.
	\label{thm:brzozowski}
\end{theorem}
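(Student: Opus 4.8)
The plan is to argue by contraposition. Since $T_\mathcal{Q} \subseteq M$ is equivalent to $\voc \setminus M \subseteq \voc \setminus T_\mathcal{Q}$, it suffices to take an arbitrary masked-out token $t \in \voc$, i.e.\ one with $t \notin M$, and show that it is not Brzozowski-admissible, i.e.\ $(ut)^{-1}L_\mathcal{Q} = \emptyset$. By the definition of $M$, the hypothesis $t \notin M$ means precisely $\follow[\text{\texttt{where}}_\mathcal{Q}](u,t) = \fin(\bot)$.

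First I would invoke the soundness of the \follow{} operator, \cref{eq:soundness-constraint}, instantiated at the whole \texttt{where} expression $e = \text{\texttt{where}}_\mathcal{Q}$. Applied to our token $t$, it turns the hypothesis $\follow[\text{\texttt{where}}_\mathcal{Q}](u,t) = \fin(\bot)$ directly into $\eval[v \leftarrow ut]{\text{\texttt{where}}_\mathcal{Q}} = \fin(\bot)$. This is the crux of the reduction: the theorem now rests entirely on reading off the operational content of the final annotation $\fin$.

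The key step is to connect $\fin(\bot)$ with emptiness of the Brzozowski derivative. By the intended meaning of final semantics, the annotator $\fin$ certifies that the annotated value is \emph{stable under any extension} of the currently generated hole $v$: once a subexpression is marked $\fin$, appending further tokens to $v$ cannot change its value. I would record this as an explicit invariant, the soundness of $\final$, stating that $\eval[v \leftarrow ut]{e} = \fin(b)$ implies $\eval[v \leftarrow utw]{e} = b$ for every continuation $w \in \voc^*$. Granting this invariant with $b = \bot$, every completed trace $utw$ evaluates $\text{\texttt{where}}_\mathcal{Q}$ to $\bot$, so by the definition of $L_\mathcal{Q}$ as the set of traces satisfying $\text{\texttt{where}}_\mathcal{Q}$, no such trace lies in $L_\mathcal{Q}$. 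Hence $(ut)^{-1}L_\mathcal{Q} = \{w \mid utw \in L_\mathcal{Q}\} = \emptyset$, which is exactly $t \notin T_\mathcal{Q}$, closing the contrapositive.

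The main obstacle is establishing the finality invariant rigorously, since the argument hinges on it entirely; I expect it to require a structural induction over the grammar of \texttt{where} expressions, verifying that each rule of \cref{tab:final} that emits a $\fin$ annotation genuinely yields an extension-invariant value. The base cases (constants, previous holes, a \texttt{stop\_at} that has already fired, an \texttt{in}-membership over a constant list that has already matched or been definitively excluded) are routine, but the boolean connectives are delicate: for \texttt{and}/\texttt{or} the annotation $\fin$ is derived by short-circuiting on a single $\fin(\bot)$ or $\fin(\top)$ operand, and I would need to check that the corresponding \follow{} rules in \cref{tab:follow} propagate these final annotations so that the inductive hypothesis transfers to the compound expression. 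A second point to handle with care is aligning the set-level definition of $L_\mathcal{Q}$ over full token sequences with the scope-based evaluation used by \follow{}, ensuring that ``a full trace lies in $L_\mathcal{Q}$'' and ``$\text{\texttt{where}}_\mathcal{Q}$ evaluates to $\top$ under the corresponding scope'' coincide; once these are in place, the remainder is the routine chaining of \cref{eq:soundness-constraint} with the invariant.
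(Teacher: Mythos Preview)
Your proposal is correct and follows essentially the same route as the paper's own proof: contraposition to disallowed tokens, an appeal to \cref{eq:soundness-constraint} to pass from $\follow[\cdot](u,t)=\fin(\bot)$ to $\eval[v\leftarrow ut]{\text{\texttt{where}}_\mathcal{Q}}=\fin(\bot)$, then the meaning of $\fin$ to extend this to all continuations and conclude $(ut)^{-1}L_\mathcal{Q}=\emptyset$. The only difference is emphasis: the paper treats your ``finality invariant'' as definitional (writing simply ``By final semantics, this means that there is no $p\in\mathcal{V}^*$ such that \dots''), whereas you flag it as something to be established by structural induction over \cref{tab:final}; your more careful stance is reasonable, but for the purposes of this theorem the paper simply takes it as the intended semantics of the $\fin$ annotator.
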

\begin{proof} \textit{(Brzozowski Soundness)}

	\begin{enumerate}
		\item By definition, we get the following: \begin{enumerate}
            \item $T_\mathcal{Q} \subseteq \mathcal{V}$, since we operate with limited vocabulary $\mathcal{V}$.
            \item Inverting the masking condition, we get $M = \mathcal{V} \setminus M^{-1}$ with the set of disallowed tokens $M^{-1} = \{t \in \mathcal{V} \;|\; \follow[\text{\texttt{where}}_\mathcal{Q}](u, t) = \fin(\bot)\}$
            \item Now, if we establish $T_\mathcal{Q} \cap M^{-1} = \emptyset$ (\textasteriskcentered), we can derive Brzozowski soundness as follows: \begin{center}$T_\mathcal{Q} \stackrel{(*)}{=} T_\mathcal{Q} \setminus M^{-1} \stackrel{(a)}{\subseteq} \mathcal{V} \setminus M^{-1} \stackrel{(b)}{=} M$ \;i.e.\; $T_\mathcal{Q} \subseteq M$\end{center}
            \item For $T_\mathcal{Q} \subseteq M$, it thus suffices to show (\textasteriskcentered), i.e. that no disallowed token in $M^{-1}$ is in $T_\mathcal{Q}$:  $\forall t \in \mathcal{V} \bullet t \in M^{-1} \implies t \notin T_\mathcal{Q}$.
        \end{enumerate}
        \item Now we prove (\textasteriskcentered): For any disallowed $t$ we know that $\follow[\text{\texttt{where}}_\mathcal{Q}](u, t) = \fin(\bot)$: \begin{itemize}
			\item Thus, for the current hole variable $v$, it holds that: $\eval[v \leftarrow ut]{\text{\texttt{where}}_\mathcal{Q}} = \fin(\bot)$.
			\item By final semantics, this means that there is no $p \in \mathcal{V}^*$ such that $\eval[v \leftarrow utp]{\text{\texttt{where}}_\mathcal{Q}} 
			\neq \bot$.
			\item By definition we know that $L_\mathcal{Q} := \{s \in \Sigma^* \;|\; \eval[\text{parse}(s)]{\text{\texttt{where}}_\mathcal{Q}} = \top\}$, where $\sigma[\text{parse}(s)]$ refers to the variable store, with variables set according to $\mathcal{Q}$ and interaction trace $s$.
			\item Therefore, we know that $utp \notin L_\mathcal{Q}$, which means that $tp \notin u^{-1}L_\mathcal{Q}$, i.e. $t \notin T_\mathcal{Q}$.
		\end{itemize}
		\item Overall, we therefore have shown that (\textasteriskcentered) holds, which implies via (1) that $T_\mathcal{Q} \subseteq M$. \qedhere
	\end{enumerate}
\end{proof}
This result is in line with \cref{eq:soundness-constraint}, and implies that \cd{}s will always allow, i.e. not mask out, any tokens that could still yield a legal decoding.

\section{Evaluation} \label{sec:evaluation}

Here, we evaluate the effectiveness of \tool as a language as well as a tool for prompt engineers. We evaluate \tool in three different case studies, encompassing a wide range of prompting scenarios.

\paragraph{Research Questions and Setup}

We focus our evaluation on three core questions:

\begin{itemize}
    \item \textbf{Expressiveness} Can we easily implement common and advanced prompting techniques with simple and concise query logic, especially in the case of interactive prompting?
    \item \textbf{Performance} Can \tool be used to effectively lower the required number of model queries and thereby lower the implied computational or API-related cost of using LMs?
    \item \textbf{Accuracy} Does LMQL's constrained decoding affect task accuracy of LMs when evaluated on standard benchmarks?
\end{itemize}

\paragraph{Baseline}

\tool{} provides a comparatively high-level interface, close to natural language prompting. Therefore, we evaluate \tool{} mainly as an alternative to other, existing high-level, text-based interfaces for Python, that are typically used to interact with LMs. 
\newcommand{\generate}{\texttt{generate()}}
More specifically, our baseline is a simple \generate{} API as e.g. provided by the HuggingFace Transformers package \cite{huggingface_generate}. 
\generate{} takes a string as input, for which it then generates a likely continuation sequence using a specified language models. This is a very accessible interface, but it does not support token level validation. We consider this as a reasonable baseline for \tool, as it reflects the current state of comparatively high-level LM APIs. For instance, \generate{} in the Transformers package does not support any token-level control beyond simple filter lists. The OpenAI API does allow logit masking, however, masks cannot be applied on a token-level, but only to the complete sequence.
These mechanisms are not capable of token-level validation and users have to handle parsing, validation and tokenization themselves. 
To reflect this, our \generate{} baseline is restricted to generating output chunk-wise, and doing parsing and validation manually. Multi-token constraints like \lstinline|THING in ["tube of sunscreen", "beach towel"]| or character-level length constraints cannot be enforced, as this requires token-level control. To enable stopping phrases, text is generated chunk-wise and when a stopping phrase is found, the output is truncated.

\paragraph{Datasets and Models} Our case studies address tasks relating to \textit{general and date understanding} \cite{srivastava2022beyond}, \textit{question answering} \cite{yang2018hotpotqa} and \textit{arithmetic math} \cite{cobbe2021training}.
With respect to the models, we rely on the publicly available open source model GPT-J 6B \cite{gpt-j} (6 billion parameters) and the more recent OPT-30B \cite{zhang2022opt} (30 billion parameters) model. Where GPT-J or OPT exceed our computational abilities, we rely on \texttt{gpt2-xl}\footnote{\url{https://huggingface.co/gpt2-xl}}, a 1.5B parameter version of GPT-2 \cite{radford2019language}.
We choose these models for evaluation as they are publicly available. This is crucial, because the \tool runtime requires integration with the decoding loop of a language model, which cannot be implemented efficiently with only limited high-level access.
The OpenAI API does not provide this kind of access, and we therefore evaluate on GPT-3 only in a limited fashion.

\paragraph{Metrics} To quantify performance, cost and usability characteristics of \tool, we consider a number of metrics:

\begin{itemize}
    \item \textbf{LOC} As a measure of conciseness we count the number of functional lines of code (LOC), i.e. excluding comments, empty lines, and fixed prompt parts (e.g. few-shot samples).
    \item \textbf{Number of Model Queries} We count the number of times the model $\vf$ is invoked for next-token prediction. This metric directly measures the computational cost of using a self-hosted LM, however, abstracts the computational cost of running the model itself.
    \item \textbf{Number of Decoder Calls} We also count the number of times a new decoding loop is started (a call to \generate{} in our baselines or an instance of the decoding \cref{alg:decoding} in \tool{}). We also count one Decoder Call per scored \lstinline|distribution| value, as this requires a new decoding loop to be started (in \tool{} and with \generate). This metric illustrates the API costs of an LM, as each decoder call will incur a cost, e.g. in terms of billing or latency.
    \item \textbf{Billable Tokens} Lastly, to model closely how API-gated models are billed, we count the number of tokens per Decoder Call, that are processed by the model as part of the prompt, plus the number of tokens that are generated. This metric is based on the billing mechanics of API-gated models like GPT-3. Based on Billable Tokens, we will make cost estimates, given the current token pricing of $\$0.02/1K$ tokens of the GPT-3 \texttt{davinci} model\footnote{\url{https://openai.com/api/pricing/}}. This highlights the potential savings if \tool could be used in place of standard high-level APIs.

\end{itemize}
We motivate this choice of performance metrics over pure runtime by the reality of using LMs in practice. Any reduction in the number of processed tokens will directly translate to a saving in cost, both with API-based models and when running a language model locally.

\paragraph{Experimental Setup} All language models are instantiated via the HuggingFace \texttt{transformers} library \cite{wolf2020transformers} with \texttt{pytorch} on the backend, using Nvidia A100 GPU with 40GB/80GB VRAM. 

\begin{figure}
    \centering
    \begin{lstlisting}[escapeinside={\@}{\@}]
argmax
    "Pick the odd word out: skirt, dress, pen, jacket.\n"
    "skirt is clothing, dress is clothing, pen is an object, jacket is clothing.\n"
    "So the odd one is pen.\n\n"
    "Pick the odd word out: Spain, France, German, England, Singapore.\n"
    "Spain is a country, France is a country, German is a language, @\dots@\n"
    "So the odd one is German.\n\n"
    "Pick the odd word out: {OPTIONS}\n"
    "[REASONING]"
    "[RESULT]"
from "EleutherAI/gpt-j-6B"
where
    not "\n" in REASONING and not "Pick" in REASONING and 
    stops_at(REASONING, "Pick the odd word") and stops_at(REASONING, "\n") and
    stops_at(REASONING, "So the odd one") and stops_at(REASONING, ".") and len(WORDS(REASONING)) < 40
distribute
    RESULT over OPTIONS.split(", ")
    \end{lstlisting}
\vspace{-1em}
\caption{\tool query implementing chain-of-thought prompting for the Odd One Out classification task.} 
\label{lst:eval-chain-of-thought}
\vspace{-1em}
\end{figure}

\subsection{Case Study 1: Chain-of-Thought Prompting}

We first consider multiple-choice question answering tasks: A LM is presented with a question $Q$ and a set of options $\mathcal{O} = \{O_1, \dots, O_n\}$. While direct prompting of a model to obtain the result as $argmax_\mathcal{O}\;P(O_i|Q)$ is possible, it is often not enough to reach good levels of performance. Further, the model's reasoning may not be clear and the resulting answers can appear quite arbitrary. \emph{Chain-of-thought} prompting \cite{wei2022chain} aims to address this, by preceding the actual question with few-shot samples that demonstrate how to arrive at a correct answer through a multi-step reasoning process. By priming the model in this way, it is more likely to produce a similar chain of thoughts, eventually leading up to the correct answer for a new question.
For this case study we implement queries for two task: The general knowledge reasoning task \emph{Odd One Out} and the \emph{Date Understanding} task, both included in the recent BIG benchmark collection \cite{srivastava2022beyond}.

\paragraph{Query and Results} We implement chain-of-thought reasoning in \tool as shown in \cref{lst:eval-chain-of-thought}. The prompt clause contains two few-shot examples with reasoning steps. We provide the comma-separated list of words of the Odd One Out task as query argument \lstinline{OPTIONS} when iterating over the dataset. The first hole variable generated by the model is \lstinline{REASONING}. We constrain the \lstinline{REASONING} variable in multiple ways, including a maximum number of words and several stopping conditions. Further, we disallow the use of \texttt{"Pick"} and the newline character, to prevent the model from digressing or skipping the reasoning steps alltogether. For decoding, we rely on \lstinline{argmax} which provides us with the greedily-determined most likely answer. 

\begin{table}
    \footnotesize
\footnotesize
    \caption{Average performance statistics (over queries) for constrained \tool chain-of-thought decoding compared with standard chunk-wise decoding for the Odd One Out and Date Understanding datasets.}
    \vspace{-1em}
    \begin{tabular}{l|rrrr|rrrr}
	    & \multicolumn{4}{c|}{\textbf{GPT-J-6B\cite{gpt-j}}} & \multicolumn{4}{c}{\textbf{OPT-30B \cite{zhang2022opt}}} \\
        & \textbf{\makecell*{Standard\\ Decoding}} & \textbf{\tool} & \textbf{$\Delta$} & \textbf{\makecell*{Est. Cost\\Savings}} & \textbf{\makecell*{Standard\\ Decoding}} & \textbf{\tool} & \textbf{$\Delta$} & \textbf{\makecell*{Est. Cost\\Savings}} \\
        \toprule
        \textit{Odd One Out} &&&&&&&\\
        Accuracy         &   
            33.33\% &   
            \textbf{34.52}\% &    
            1.19\% & 
            &
            \textbf{34.52}\% &
            \textbf{34.52}\% &
            0.00\% &
            \\
        Decoder Calls &     
            7.96 &
            \textbf{5.96} &  
            -25.11\% &
            & 
            7.96 & 
            \textbf{5.96} & 
            -25.11\% & \\
        Model Queries   &    
            73.04 &
            \textbf{41.51} & 
            -43.16\% &
            &
            73.04 &
            \textbf{40.70} &
            -44.27\% &
            \\
        Billable Tokens  &   
            1178.71 &   
            \textbf{861.32} &  
            -26.93\% &  
            0.63\textcent\raise-0.1ex\hbox{\tiny/query} &
            1173.21&
            \textbf{856.17} &
            -27.02\% &
            0.63\textcent\raise-0.1ex\hbox{\tiny/query}
            \\
        \midrule
        \textit{\makecell[l]{Date Understanding}} &&&&&&&\\
        Accuracy         &    
            \textbf{22.89}\% &
            \textbf{22.89}\% &
            0.00\% &
            &
            \textbf{29.16}\% &
            \textbf{29.16}\% &
            0.00\% &
            \\
        Decoder Calls &      
            9.84 &      
            \textbf{6.84} &  
            -30.47\% &
            &
            9.84 &
            \textbf{6.84} &
            -30.47\% &
            \\
        Model Queries   &     
            103.38 &     
            \textbf{57.26} &   
            -44.61\% &
            &
            103.38 &
            \textbf{57.00} &
            -44.86\% &
            \\
        Billable Tokens  &   
            4131.28 &   
            \textbf{2844.90} &  
            -31.14\% &  
            2.57\textcent\raise-0.1ex\hbox{\tiny/query} &
            4129.55 &
            \textbf{2842.93} &
            -31.16\% &
            2.57\textcent\raise-0.1ex\hbox{\tiny/query}
            \\
        \bottomrule
    \end{tabular}
    \label{tab:eval-chain-of-thought}
\vspace{-0.75em}
\end{table}

Lastly, we use the \lstinline{distribute} clause, to compute a probability distribution over the set of possible answers in $\mathcal{O}$, i.e. $P( \cdot | \text{\texttt{"\ebnfph{p}\ebnfph{q}\ebnfph{r}"}})$, which is conditioned on the concatenation of the few-shot samples \texttt{\ebnfph{p}}, the question \texttt{\ebnfph{q}} and the generated reasoning steps \texttt{\ebnfph{r}}.

Analogously to our \tool query, we implement the same prompting behavior with a \texttt{generate()}-based python program. As discussed, the baseline program employs similar stopping conditions for \texttt{REASONING} but does not encode token level constraints. We evaluate both programs on Odd One Out and Date Understanding with GPT-J/OPT-30B, and document the results in \cref{tab:eval-chain-of-thought}.

\paragraph{Results} Overall, we observe the same or improved accuracy for (constrained) \tool decoding when compared to Standard Decoding. Manual inspection reveals that the accuracy improvements on \textit{Odd One Out} can be traced back to the \lstinline|REASONING| variable: In \tool{}, the constraints shown in \cref{lst:eval-chain-of-thought} (e.g. word limit and disallowing  e.g. \texttt{"Pick"}) guide the model when generating \lstinline|REASONING|. In Standard Decoding, these constraints cannot be enforced due to the limitations of the \generate{} API, leading to a different \lstinline|REASONING| output. As in \textit{chain-of-though}, the final answer \lstinline|RESULT| is conditioned on the generated \lstinline|REASONING| steps (cf. task demonstrations in \cref{lst:eval-chain-of-thought}), \tool constraints lead to a different final answer and therefore impact accuracy.
With regards to efficiency, \tool reduces model queries and the number of billable tokens by up to $41\%$ and $31\%$ respectively. Overall, we observe a significant reduction in cost/compute, especially when considering that the \tool-based constrained decoding can achieve the same or better accuracy. 
We find that \tool{} reduces program size in LOC to $26\%$ ($34\%$ resp.) of the corresponding baseline implementation.

\paragraph{OpenAI GPT-3.5}
As a control experiment, we also run both Standard Decoding and the \tool{} queries on the GPT-3.5 model \texttt{text-davinci-003} (a limited integration of the OpenAI API is possible in LMQL). There, we also observe maintained accuracy for Odd One Out ($42.86$\%) and slightly improved performance on Date Understanding (Standard Decoding: $85.29$\%, \tool $86.10$\%).

\subsection{Case Study 2: Interactive Prompting}
\label{sec:eval-interactive-prompting}

Chain-of-thought prompting is an effective method to improve model understanding \cite{wei2022chain}. It can be used to extract knowledge from a model or generate new insights by multi-step reasoning. However, in some cases a model may not know about the required context information and external sources have to be consulted. For instance, for question answering the prompting scheme \texttt{ReAct}~\cite{yao2022react} proposes to augment chain-of-thought-based prompting with the ability for the model to interactively query external sources such as Wikipedia. As \tool supports loops, branches, and function calls in its prompt clause, it lends itself well to implementing these kinds of interactive prompting scenarios. By relying on control flow in the prompting clause of a query, we can interpret model results step-by-step and inject information from external sources. %

\paragraph{Query} To invoke external actions like Wikipedia lookups, \texttt{ReAct} relies on designated action phrases such as \texttt{Search} and \texttt{Finish}, that the LM can produce as needed. To implement this interactive behavior in \tool, we rely on a basic interpretation loop as shown in \cref{lst:eval-interactive-prompting}. The loop iterates over the model's output and interprets actions when applicable. Wikipedia lookups are implemented as calls to an external python utility. During branching and beam search with multiple hypotheses, the loop and corresponding lookup operations will automatically be issued as required during decoding. The loop terminates when the model generates a \texttt{Finish} action, storing the overall results of the query in the \texttt{SUBJECT} variable. To further guide the generation process, we constrain \texttt{MODE} to be in \;\{\texttt{Tho}, \texttt{Act}\}. Further, we implement simple stopping conditions for \texttt{THOUGHT} and \texttt{SUBJECT} to prevent the model from violating the \texttt{ReAct} reasoning pattern.

\begin{figure}[t]
\begin{lstlisting}[escapeinside={\@}{\@}, basicstyle=\ttfamily\scriptsize]
import wikipedia_utils
sample(no_repeat_ngram_size=3)
    "What is the elevation range for the area that the eastern sector of the Colorado orogeny extends into?"
    "Tho 1: I need to search Colorado orogeny, find the area that the eastern sector of the Colorado @\dots@\n"
    "Act 2: Search 'Colorado orogeny'\n"
    "Obs 2: The Colorado orogeny was an episode of mountain building (an orogeny) @\dots@\n"
    "Tho 3: It does not mention the eastern sector.  So I need to look up eastern sector.\n"
    @\dots@
    "Tho 4: High Plains rise in elevation from around 1,800 to 7,000 ft, so the answer is 1,800 to 7,000 ft."    
    "Act 5: Finish '1,800 to 7,000 ft'"
    "Where is Apple Computers headquartered?\n"
    for i in range(1024):
        "[MODE] {i}:"
        if MODE == "Tho": 
            "[THOUGHT] "
        elif MODE == "Act":
            " [ACTION] '[SUBJECT]\n"
            if ACTION == "Search": 
                result = wikipedia_utils.search(SUBJECT[:-1]) # cutting of the consumed '
                "Obs {i}: {result}\n"
            else:
                break # action must be FINISH
from "gpt2-xl"
where
    MODE in ["Tho", "Act"] and stops_at(THOUGHT, "\n") and
    ACTION in ["Search", "Finish"] and len(words(THOUGHT)) > 2 and
    stops_at(SUBJECT, "'") and not "Tho" in THOUGHT
\end{lstlisting}
\vspace{-1em}
\caption{\tool code for interactive \texttt{ReAct} \citep{yao2022react} prompting scheme for question answering.}
\label{lst:eval-interactive-prompting}
\vspace{-0.75em}
\end{figure}

\paragraph{Python Baseline} As a baseline for scripted interpretation, we implement a python program that supports the same \texttt{ReAct} prompting as the query in \cref{lst:eval-interactive-prompting}. To implement \tool's declarative parsing of \texttt{THOUGHT}, \texttt{SUBJECT}, and \texttt{ACTION}, we rely on built-in python functionality to parse and process the chunk-wise produced output. For this, we note that we have to resort to hand-crafted parsing logic, whereas in \tool we can simply rely on declarative predicates like \texttt{STOPS\_AT} and validation conditions in the where clause of the query. 
We note that the baseline implementation can only support \texttt{sample} and \texttt{argmax} decoding. Deeper integration, e.g. with beam search, is not easily realizably in python, as the prompting program must be capable of branching into multiple execution heads in accordance with the branching of decoding. In contrast, \tool supports this out-of-the-box. Lastly, in our baseline implementation, we have to invoke the model multiple times, each time generating a new chunk of output, parsing, and evaluating potential action phrases. For this, we have to choose the chunk size appropriately. We overview the implications of different choices for this parameter in \cref{fig:eval-react-chunk-size}.
For our comparison with \tool{}, we choose standard decoding with chunk size of 30, which minimizes the number of billable tokens, while not issuing exceedingly many model queries.

\paragraph{Results} To assess \tool performance benefits with interactive prompting workloads, we apply our \texttt{ReAct} implementations to a question answering task from the HotpotQA \cite{yang2018hotpotqa} dataset. We observe a significant reduction of decoder calls of up to 80\% when using \tool{} over standard decoding. This can be attributed to \tool{}'s ability to decode the whole sequence in one run, validating on-the-fly. Standard Decoding on the other hand has to decode the whole sequence in chunks, invoking \lstinline|generate()| at least as many times as interactions are required. Regarding the total number of model queries, we observe a reduction of at least $30\%$. For Billable Tokens, we observe an even stronger effect, where \tool saves up to $76\%$ of the tokens, leading to a significant saving in costs, i.e. $76\%$ fewer tokens or 5.2\textcent{}. Considering program size last, we implement \texttt{ReAct} in just $22$ LOC of \tool{}, which is $63\%$ fewer lines than in our python-based implementation.
\begin{figure}
    \centering
    \begin{tikzpicture}[baseline, font=\footnotesize, scale=0.9]
        \begin{scope}
		    \fill[color=black!2] (0,0) rectangle (3.41,2.25);
        \node at (1.7,2.7) {Model Queries};
        \begin{axis}[
            xlabel=Chunk Size,
            ylabel=,
            width=5cm,
	    height=3.8cm,
            xtick pos=bottom,ytick pos=left,
            xmin=18,
            ymin=0,
            ymax=300,
            name=firstplot,
            legend pos=south east,
            axis line style={draw=none},
            ]
        \addplot table {
            20 120
            30 150
            40 200
            50 250
        };
        \addplot+[mark=none,line width=1pt, dashed] table {
            20 95
            30 95
            40 95
            50 95
        };
        \legend{}
        \end{axis}
        \end{scope}
        \begin{scope}[xshift=3.9cm]
        \fill[color=black!2] (0,0) rectangle (3.41,2.25);
        \node at (1.7,2.7) {Decoder Calls};
        \begin{axis}[
            xlabel=Chunk Size,
            ylabel=,
            width=5cm,
	    height=3.8cm,
            xtick pos=bottom,ytick pos=left,
            xmin=18,
            ymin=0,
            name=firstplot,
            legend style={at={(2.73,0.5)},anchor=center,draw=none},
            legend cell align={left},
            axis line style={draw=none}
            ]
        \addplot table {
            20 6
            30 5
            40 5
            50 5
        };
        \addplot+[mark=none,line width=1pt, dashed] table {
            20 1
            30 1
            40 1
            50 1
        };
        \legend{Standard Decoding, LMQL Decoding}
        \end{axis}
        \end{scope}
        \begin{scope}[xshift=8.2cm]
        \fill[color=black!2] (0,0) rectangle (3.41,2.25);
        \node at (1.7,2.7) {Billable Tokens};
        \begin{axis}[
            xlabel=Chunk Size,
            ylabel=,
            width=5cm,
	    height=3.8cm,
            xtick pos=bottom,ytick pos=left,
            xmin=18,
            ymin=0,
            name=firstplot,
            legend pos=south east,
            axis line style={draw=none}
            ]
        \addplot table {
            20 4069
            30 3404
            40 3474
            50 3534
        };
        \addplot+[mark=none,line width=1pt, dashed] table {
            20 807
            30 807
            40 807
            50 807
        };
        \legend{}
        \end{axis}
        \end{scope}
    \end{tikzpicture}%
    \vspace{-0.25em}
    \caption{Comparing different chunk sizes used for the baseline implementation as compared to \tool{}, which does not require chunk-wise decoding. All results were measured for interactive \texttt{ReAct} prompting.}
    \label{fig:eval-react-chunk-size}
    \vspace{-1.5em}
\end{figure}

\subsection{Case Study 3: Arithmetic Reasoning}
\label{sec:eval-arithmetic}

\begin{wraptable}[10]{r}{0.4\textwidth}
    \vspace{-1.35em}
    \footnotesize
    \caption{Lines of Code (LOC) required to implement the baseline implementations and corresponding \tool queries.}
    \label{tab:eval-loc}
    \begin{tabular}{lcc}
    \toprule
    \textbf{Task} & \textbf{\makecell*{Python\\Baseline}} & \textbf{\tool} \\
    \midrule
    Odd One Out & 34 & 9 \\
    Date Understanding & 38 & 13 \\
    Arithmetic Reasoning & 59 & 22 \\
    \texttt{ReAct} & 78 & 18 \\
    \bottomrule
    \end{tabular}
\end{wraptable}

Lastly, we consider arithmetic reasoning. Existing work shows that LMs struggle with evaluating arithmetic expressions correctly \cite{wei2022chain}. While reasoning might be correct, mistakes in the concrete arithmetic calculations lead to an incorrect result \cite{wei2022chain,cobbe2021training}. This is exacerbated by the open-ended nature of math problems, where the result is not picked from a limited set of options, but can be any valid number. Recent works \cite{wei2022chain, cobbe2021training,andor2019giving} augment LMs with the ability to externally evaluate arithmetic expressions during generation.

\paragraph{Query} In \cref{fig:eval_arithmetics:query} we demonstrate arithmetic evaluation in \tool, relying on scripted prompting and constraints. The query decodes reasoning and calculations steps from the model, scanning for occurrences of \lstinline{"<<"}. 
Once it encounters such a sequence, it queries the model for the to-be-evaluated expression (e.g. \lstinline{1+2=?}), evaluates it using an external utility function, and passes back the result. 
\paragraph{Results} 
We applied our query, as well as a baseline program, to an arithmetic reasoning problem from the GSM8K dataset \cite{cobbe2021training}. As shown by the interaction trace in \cref{fig:eval_arithmetics:trace}, our \tool{} query detects and processes arithmetic expressions, as they occur in the model's output, leading up to the answer. 
The necessary query logic is comparatively basic, only requiring some text processing and a simple interpretation loop. Finally, by applying an \lstinline|int| constraint on \lstinline|RESULT|, we can enforce the final model's output to always be a valid integer. In this case, GPT-J 6B is not able to solve the problem correctly. However, the example still demonstrates that \tool{} can be used to implement on-the-fly arithmetic evaluation, aiding the model in solving the task. Collecting query statistics, we compare the two implementations in \cref{tab:eval-react-comparison}. For the baseline implementation (standard decoding), the number of decoder calls is determined by the number of arithmetic expressions in the model's output. For \tool{}, this has no impact, as arithmetic expressions is done on-the-fly. Overall this means that \tool{} only requires one decoder call, where the standard approach requires $7$. Further, we observe a significant reduction of 65\% in model queries and 85\% in billable tokens (saving 6.2\textcent{} per query with GPT-3 \lstinline|davinci|). The \tool{} implementation of arithmetic evaluation requires $18$ LOC, compared to $78$ LOC required for the python-based baseline.
\begin{table}[ht]	
    \caption{\tool constrained decoding compared to Standard Decoding in an interactive prompting scenario. 
    }
    \vspace{-0.75em}
    \footnotesize
    \begin{tabular}{lrrrr}
        \footnotesize
            & \textbf{Standard Decoding} & \textbf{\tool} & \textbf{$\Delta$} & \textbf{Est. Cost Savings} \\
    \toprule
   \textit{\texttt{ReAct} (Case Study 2)}\\
	    Decoder Calls &    5 &  \textbf{1} &  -80\% &           \\
	    Model Queries &    150 &    \textbf{95} &  -36.67\% &           \\
	    Billable Tokens & 3,404 & \textbf{807} & -76.29\% & 5.2\textcent\raise-0.1ex\hbox{\tiny/query}\\
        \midrule
    \textit{Arithmetic Evaluation (Case Study 3)}\\
	    Decoder Calls &              7 &      \textbf{1} &  -85.71\% &    \\                                         
	    Model Queries   &            210 &     \textbf{71} &  -66.19\%  & \\
	    Billable Tokens  &           3,649 &    \textbf{550} &  -84.93\% & 6.2\textcent\raise-0.1ex\hbox{\tiny/query}\\
    \bottomrule
    \end{tabular}
    \label{tab:eval-react-comparison}
\vspace{-0.75em}
\end{table}
\begin{figure}[h]
    \centering
    \begin{subfigure}[t]{0.49\textwidth}
    \begin{lstlisting}[escapechar=!]
argmax(distribution_batch_size=1, max_length=2048)
    "!\ebnfph{few-shot examples}!"
    "Q: {QUESTION}\n"
    "A: Let's think step by step.\n"
    for i in range(1024):
        "[REASON_OR_CALC]"
        if REASON_OR_CALC.endswith("<<"):
            " [EXPR] "
            result = calculator.run(EXPR)
            " {result} >> "
        elif REASON_OR_CALC.endswith("So the answer"):
            break
    " is [RESULT]"
from "EleutherAI/gpt-j-6B"
where
    int(RESULT) and
    stops_at(REASON_OR_CALC, "<<") and
    stops_at(EXPR, "=") and 
    stops_at(REASON_OR_CALC, "So the answer")
    \end{lstlisting}
    \vspace{-0.45em}
    \caption{\tool query for arithmetic reasoning.}
    \label{fig:eval_arithmetics:query}
    \end{subfigure}
    \hfill
    \vline
    \hfill
    \begin{subfigure}[t]{0.43\textwidth}
	    \vspace{2.5em}
\begin{adjustbox}{width=1.0\textwidth,center}
    \begin{lstlisting}[language=reactprompt,basicstyle=\ttfamily\footnotesize, escapechar=!]
Q: Noah is a painter. He paints pictures and 
sells them at the park. He charges !\$!60 for 
a large painting and !\$!30 for a small painting. 
Last month he sold eight large paintings and 
four small paintings. If he sold twice as much 
this month, how much is his sales for this month?
A: Let's think step by step.
!\colorbox{my-blue}{He sold 8 large paintings and 4 small}!
!\colorbox{my-blue}{paintings last month.}!
!\colorbox{my-blue}{He sold twice as many this month.}!
!\colorbox{my-blue}{8 large paintings x \$60 = <{}<}\colorbox{my-orange}{8*60=}\colorbox{my-green}{480}\colorbox{my-blue}{ >{}>  480}!
!\colorbox{my-blue}{4 small paintings x \$30 = <{}<\colorbox{my-orange}{4*30=}\colorbox{my-green}{120} >{}>  120}!
!\colorbox{my-blue}{So the answer}! is !\colorbox{my-red}{480}!
    \end{lstlisting}
\end{adjustbox}
	    \vspace{3.8em}
    \caption{Interaction Trace.}
    \label{fig:eval_arithmetics:trace}
    \end{subfigure}
    \vspace{-0.5em}
    \caption{An \tool query implementing on-the-fly evaluation of arithmetic expressions generated by the LM during problem solving steps, addressing a task from GSMK8 \cite{cobbe2021training}. Text in the output, that corresponds to \colorbox{my-blue}{REASON\_OR\_CALC}, \colorbox{my-orange}{EXPR}, \colorbox{my-green}{calculation results} and \colorbox{my-red}{RESULT} is marked in color.}
    \label{fig:eval-calculator}
\vspace{-1em}
\end{figure}

\subsection{Discussion}
Our three case studies show that: i) \tool allows great expressiveness, i.e. several approaches from current state-of-the-art methods can be directly encoded in a straightforward scripting style, requiring much fewer lines of code than corresponding python-based implementations (cf. \cref{tab:eval-loc});
ii) \tool drastically reduces the number of model queries and thereby both efficiency and run time. This is enabled by \tool{}s support for token level validation, which enables us to enforce constraints on-the-fly rather than with chunk-wise decoding and backtracking.
And, iii) that \tool does not impact the accuracy achieved by the model. In fact, in some cases, the enforced constraints even yield slightly improved accuracy. In addition to all this, we have shown that when used in the context of paid, API-gated models, \tool would enable significant monetary savings, given the reduction in billable tokens that we observe.
Lastly, we note that our case studies cannot replace a full user study of \tool, assessing its impact and usability together with real-world prompt engineers. We therefore note that the lack of such a study poses a threat to the validity of our claims with respect to usability.

\section{Related Work} \label{sec:related}

\paragraph{\paradigmlong (\paradigm)} Recent work has proposed a variety of different prompting techniques: chain-of-thought prompting \cite{wei2022chain}, interactive question answering \cite{yao2022react}, aggregation-based schemes like self-consistency \cite{wang2022self}, ThinkSum \cite{ozturkler2022thinksum}, and Iterated Decomposition \cite{reppert_iterated_2023}. 
Recently, a program-aided version of chain-of-thought \citep{gao2022pal,chen2022xxw} with access to a language interpreter was proposed. There, the code output of an LM is fed to an interpreter in order to obtain the answer to e.g. arithmetic tasks by code execution. 
We consider all these works as instances of \paradigm (also discussed under the term of prompt programming \cite{ReynoldsM21, ZhouMHPPCB22}), where the goal is to compose and interact with language models to achieve a specific task. A few select works have identified this trend, and propose novel LM-focused programming systems: PromptChainer \cite{WuJD0MTC22}, \texttt{langchain} \cite{langchain23}, OpenPrompt \cite{ding2021openprompt} and PromptSource \cite{bach2022promptsource} provide integrated development environments or libraries for LM interaction. The latter two even support a simple templating language akin to \tool{} top-level string semantics. However, none of these projects implement constraints or control flow like \tool{} does.
Finally, \citet{dohan2022language} discuss the idea of language model cascades, relating LM querying to probabilistic programming, which opens up interesting avenues for future work, also in the more general context of language model programming and \tool.

\paragraph{Constraining Language Models} The idea of constraining LMs has been applied across a range of fields. \citet{shin2021constrained} constrain a model's output to a more easily-interpretable subset of the English language. More specifically, they handcraft custom next-token prediction programs to implement specific semantic parsing tasks using LMs. \citet{PoesiaP00SMG22} and \citet{scholak2021picard} on the other hand, are concerned with the task of generating source code. In this setting, syntactic and semantic validity is crucial. To realize this, they integrate existing parsers and validation methods. \tool{} on the other hand provides a generic interface to facilitate constrained decoding by providing high-level constructs. Still, our set of operators can easily be extended by the user, allowing for the integration of grammar-based parsers, semantic code validation or other methods.

\section{Conclusion} \label{sec:conclusion}
In this work, we introduce the concept of \paradigmlong, a novel way to interact with (large) language models. We presented \tool{}, a high-level query language, offering a concise and intuitive syntax. \tool{} implements purpose-designed evaluation semantics, which enable efficient query execution. We have substantiated this claim in a series of case studies, where we demonstrate that complex, state-of-the-art prompting techniques can be implemented as intuitive, concise and efficient \tool{} programs that reduce (compute) costs by up to $80\%$. 

\section*{Further Resources} 
With this paper we release our evaluated artifact \cite{lmql_lang_2023_7711823}, our up-to-date codebase at \url{https://github.com/eth-sri/lmql}, an extended updated version at \url{https://arxiv.org/abs/2212.06094} and a project webpage, including live demonstration, at \url{https://lmql.ai}.

\section*{Acknowledgements} 
We thank our colleague Mark Müller for his thoughtful comments and proofreading, and our reviewers and shepard for their service, thoughtful feedback and comments.

This work has received funding from the Swiss State Secretariat for Education, Research and Innovation (SERI) (SERI-funded ERC Consolidator Grant).

\message{^^JLASTBODYPAGE \thepage^^J}

\clearpage
\bibliographystyle{ACM-Reference-Format}
\bibliography{references_}

\message{^^JLASTREFERENCESPAGE \thepage^^J}

\ifbool{includeappendix}{%
	\clearpage
	\appendix
	\section{Implementation}
\label{sec:implementation}

In this section, we discuss a number of technical aspects of our \tool implementation, as can be found at \url{https://github.com/eth-sri/lmql}.

\subsection{Language Runtime}

\paragraph{Parser and Python Compatibility} We implement \tool{} as a superset of python. This also manifests in our implementation, where we rely on the python tokenizer and parser to process LMQL code. Subexpressions in an LMQL query, such as in the \lstinline|where| clause, are parsed as standard python. After some basic program transformations, we emit a python function that interacts with the \tool{} runtime, and allows for interrupted execution by leveraging \lstinline|yield| and \lstinline|async| semantics. This allows us to implement \tool{} as a regular python library, which can be used in any python environment.

\paragraph{Eager Evaluation Semantics} To implement our evaluation semantics, we transform the abstract syntax tree as returned by the python parser into a runtime representation of a computational graph, modelling dependencies among operations explicitly. Users can easily extend \tool{} with custom operators, by implementing a simple class interface with \lstinline|forward|, \lstinline|final| and \lstinline|follow| functions, similar to the integration of custom operators in the popular \lstinline|pytorch| library. Custom operators can easily be registered with the runtime, and the compiler will automatically generate the necessary code to integrate them into the \tool{} computational graph.

\subsection{Model Integration}
\label{sec:model-integration}

\paragraph{Inference API} To enable quick turnaround times during development, \tool{} relies on a client-server-architecture. The server is responsible for inference, loading and managing the model. In our current implementation, it is configured to use a specific HuggingFace Transformers model. Users then interact with the \tool{} client, which is a simple python library. The client parses the user-provided \tool{} code, constructs the computational graph, and also runs the decoding loop. Only the forward pass of the underlying model is outsourced to the server. This naturally aligns with settings in which inference is run on some remote server with capable hardware, while the user interacts with the model via a fast, local client with quick startup times.

\paragraph{Inference as a Service} The underlying client-server architecture of \tool{} also allows for a separation of the \tool{} client and inference as a service. In principle, vendors of API-gated LMs may therefore support \tool{} by providing just the necessary inference API. Alternatively, vendors could accept to-be-executed \tool{} code directly, which would offer customers more control over the decoding process than with current standard APIs. In this context, we consider \tool{} a proposal for the standardization of language model interaction across different vendor-specific APIs. Implementing \tool{} support would allow users to write prompting code once, and run it on any LM platform, without having to change their code. In such a setting, however, we advise for sandboxing of the executed \tool{} queries (like in \emph{serverless computing}), as \tool{} allows for arbitrary code to be executed.

\paragraph{Decoding Loop} \tool{} only requires a small change to existing decoder implementations. For a practical demonstration, see our implementation as published with this paper, in which we adapt the existing HuggingFace Transformers decoding loop to be \tool{}-compatible. In general, \tool{} scripted prompting and output constraining both compile down to token level prediction masks. This is typically already implemented with existing decoders and just needs an additional hook, to call the \tool{} runtime after each produced token. Using this simple interface, \tool{} can be integrated into any decoder implementation, without requiring any changes or retraining of the underlying model.

\subsection{Playground and Visual Debugger}

\begin{figure}
    \centering
    \includegraphics[width=0.8\linewidth]{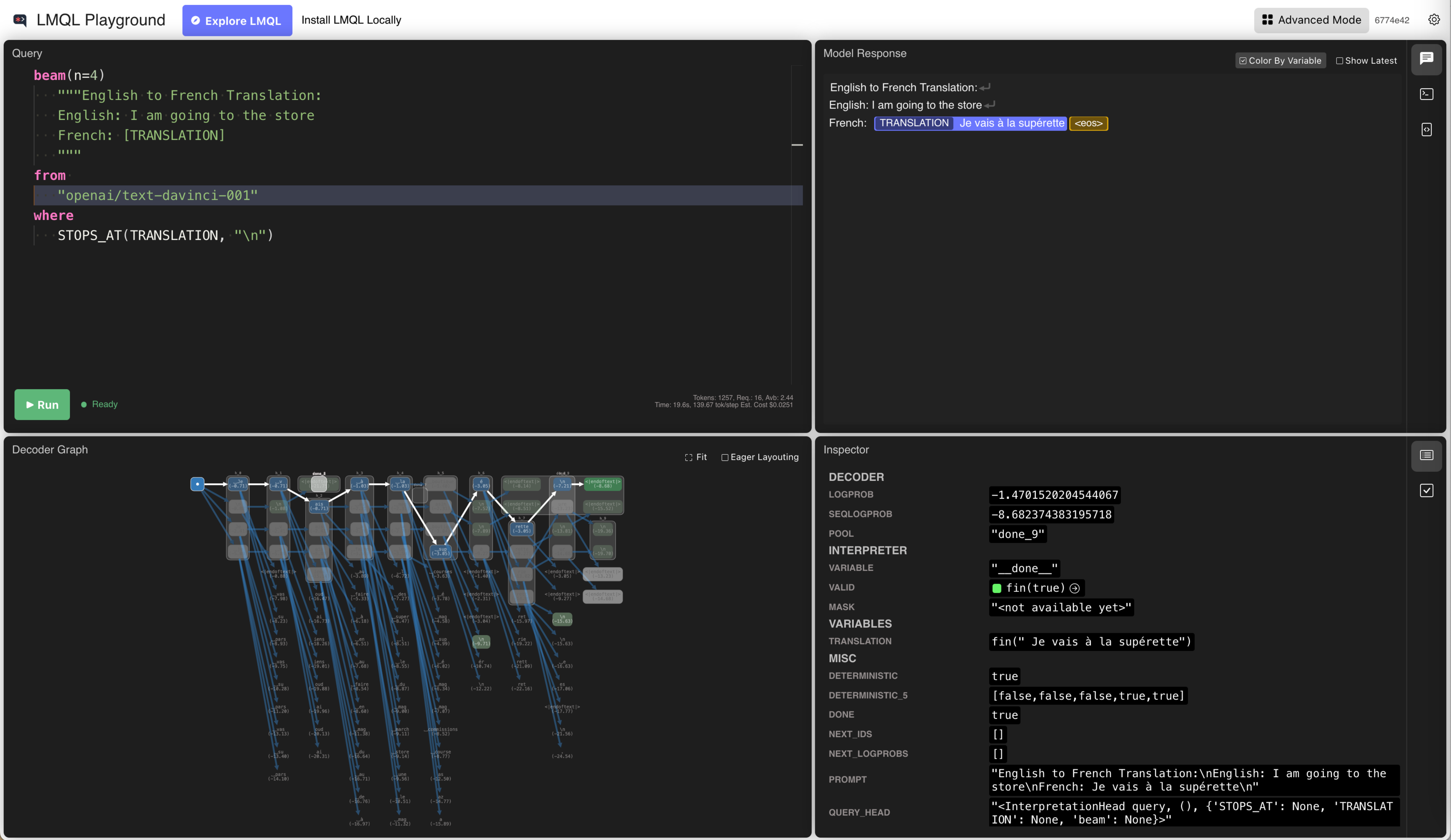}
    \caption{Screenshot of the \tool{} visual debugger in the \tool Playground.}
    \label{fig:debugger}
\end{figure}

Apart from command-line tooling, the \tool{} runtime also includes a web-based playground, helpful in constructing and debugging \tool{} programs. A screenshot of the visual debugger is shown in \cref{fig:debugger}. A hosted version can also be found at \url{https://lmql.ai/playground}.

\paragraph{Editor and Compiler} The visual debugger provides an editor window for constructing \tool{} queries. After a query is executed, users can view the compiler output, i.e. the resulting python code, including the code that constructs the computational graph and executes the prompt.

\paragraph{Decoder Graph} Users can track the different decoding branches of the currently active decoding method in real-time. This includes simple parallel decoding when sampling more than one sequence, but also multi-branch decoding like beam search. The debugger visualizes (sub-)tokens, and at each decoder step, users can inspect the current interaction trace, the value of prompt variables as well as the current state of \lstinline|where| clause validation.

\paragraph{Validation and Masking} Lastly, the computational graph of the \lstinline|where| clause can be visualized and users can track the current value of the expression. In addition to the regular value semantics and partial evaluation, this includes support for both \textsc{Final} and \textsc{Follow} semantics. Different shades of green and red indicate final and non-final \lstinline|True| and \lstinline|False| values, respectively. The \textsc{FollowMap} at each operation can also be inspected, allowing for a detailed analysis of the current state of the computational graph. This can be helpful when developing new \tool{} operators, as it allows for a quick and easy debugging of the underlying semantics.
}{}

\message{^^JLASTPAGE \thepage^^J}

\end{document}